\documentclass{article}


\usepackage[final]{neurips_2023}




\usepackage[utf8]{inputenc} 
\usepackage[T1]{fontenc}    

\usepackage[colorlinks,citecolor=cyan]{hyperref}
\hypersetup{
  colorlinks,
  linkcolor={red!50!black},
  citecolor={blue!50!black},
  urlcolor={blue!80!black}
}

\usepackage{url}            
\usepackage{booktabs}       
\usepackage{amsfonts}       
\usepackage{nicefrac,enumitem}       
\usepackage{microtype}      
\usepackage{xcolor}         

\usepackage{amsmath}
\usepackage{amssymb}
\usepackage{mathtools}
\usepackage{amsthm}
\usepackage{mathrsfs}
\usepackage{mathbbol}
\usepackage{bbm}
\usepackage{thm-restate}
\usepackage[capitalize,noabbrev]{cleveref}
\crefname{equation}{}{}
\usepackage{subcaption}
\usepackage{enumerate}

\theoremstyle{plain}
\newtheorem{theorem}{Theorem}[section]
\newtheorem{proposition}[theorem]{Proposition}

\theoremstyle{definition}
\newtheorem{definition}[theorem]{Definition}

\newtheorem{example}[theorem]{Example}
\theoremstyle{remark}
\newtheorem{remark}[theorem]{Remark}

\DeclareSymbolFontAlphabet{\mathbbl}{AMSb}

\newcommand{\stateSpace}{\mathcal{X}}

\newcommand{\actionSpace}{\mathcal{A}}
\newcommand{\transitionFn}{\mathcal{P}}

\newcommand{\rewardFn}{\mathcal{R}}

\newcommand{\Prob}{\mathbbl{P}}
\newcommand{\Epi}{\mathbbl{E}_\pi}

\newcommand{\R}{\mathbbl{R}}
\newcommand{\N}{\mathbbl{N}}

\newcommand{\M}{\mathcal{M}}

\newcommand{\T}{\mathcal{T}}

\newcommand{\F}{\mathscr{F}}

\newcommand{\cL}{\mathcal{L}}

\newcommand{\bPi}{\mathbb{\Pi}}

\newcommand{\bM}{\mathbbl{M}}

\newcommand{\cD}{\mathcal{D}}

\newcommand{\cI}{\mathcal{I}}

\newcommand{\Mdist}{\M_{\mathrm{dist}}}

\newcommand{\probMeasures}{\mathscr{P}}

\newcommand{\realProbMeasures}{\mathscr{P}(\R)}

\newcommand{\realProbMeasuresmMoments}{\mathscr{P}_m(\R)}

\newcommand{\rhoDomain}{\mathscr{P}_\rho(\R)}
\newcommand{\psiDomain}{\mathscr{P}_\psi(\R)}
\newcommand{\psiIDomain}{\mathscr{P}_{\psi_i}(\R)}
\newcommand{\psiImage}{I_\psi}

\newcommand{\muQuantile}{F^{-1}_\mu}
\newcommand{\nuQuantile}{F^{-1}_\nu}

\DeclareMathOperator*{\E}{\mathbbl{E}}

\newcommand{\1}{\mathbbm{1}}

\title{Distributional Model Equivalence for Risk-Sensitive Reinforcement Learning}


%


\author{%
  Tyler Kastner\thanks{Correspondence to: \href{mailto:tkastner@cs.toronto.edu}{tkastner@cs.toronto.edu}.} \\
  University of Toronto, Vector Institute\\
  \And
  Murat A. Erdogdu \\
  University of Toronto, Vector Institute \\
  \And
   Amir-massoud Farahmand \\
  Vector Institute, University of Toronto\\
}

\begin{document}

\maketitle

\begin{abstract}
    We consider the problem of learning models for risk-sensitive reinforcement learning. We theoretically demonstrate that proper value equivalence, a method of learning models which can be used to plan optimally in the risk-neutral setting, is not sufficient to plan optimally in the risk-sensitive setting. We leverage distributional reinforcement learning to introduce two new notions of model equivalence, one which is general and can be used to plan for any risk measure, but is intractable; and a practical variation which allows one to choose which risk measures they may plan optimally for. We demonstrate how our framework can be used to augment any model-free risk-sensitive algorithm, and provide both tabular and large-scale experiments to demonstrate its ability.
\end{abstract}

\setlength{\belowdisplayskip}{3pt}


\section{Introduction}\label{sec:intro}


Reinforcement learning is a general framework where agents learn to sequentially make decisions to optimize an objective, such as the expected value of future rewards (risk-neutral objective) or the conditional value at risk of future rewards (risk-sensitive objective).
It is a popular belief that a truly general agent must have a world model to plan with and limit the number of environment interactions needed \citep{russell2010artificial}. One way this is achieved is through model-based reinforcement learning, where an agent learns a model of the environment as well as its policy which it uses to act.

The classical approach to learning a model is to use maximum likelihood estimation (MLE) based on data, which given a model class selects the model which is most likely to have produced the data seen. If the model class is expressive enough, and there is enough data, we may expect a model learnt using MLE to be useful for risk-sensitive planning. However, the success of this method relies on the model being able to model everything about the environment, which is an unrealistic assumption in general. 

As opposed to learning models which are accurate in modelling every aspect of the environment, recent works have advocated for learning models with the decision problem in mind, known as decision-aware model learning \citep{farahmand2017value, farahmand2018iterative, d2020gradient, abachi2020policy, grimm2020value, grimm2021proper}. In particular, \citet{farahmand2017value} introduced \emph{value-aware model learning}, which uses a model loss that weighs model errors based on the effect the errors have on potential value functions. This framework has since been iterated on and improved upon in later works such as \citet{farahmand2018iterative, abachi2020policy, voelcker2021value}. Complementarily, \citet{grimm2020value} introduced the \emph{value equivalence principle}, a framework of partitioning the space of models based on the properties of the Bellman operators they induce. This framework has been extended by \citet{grimm2021proper}, where the authors introduce a related partitioning, called \emph{proper value equivalence}, based on which models induce the same value functions. They substantiate this approach by demonstrating that any model in the same equivalence class as the true model is sufficient for optimal planning.

While standard reinforcement learning maximizes the expected return achieved by an agent, this may not suffice for many real-life applications. When environments are highly stochastic or where safety is important, a trade-off between the expected return and its variability is often desired. This concept is well-established in finance, and is the basis of modern portfolio theory \citep{markowitz1952}. Recently this approach has been used in reinforcement learning, and is referred to as \emph{risk-sensitive reinforcement learning}. In this setting, agents learn to maximise a \emph{risk measure} of the return which is possibly different from expectation (in the case it is expectation, it is referred to as risk-neutral), and may penalize or reward risky behaviour \citep{howard1972, heger94, tamar2015policy,di2012policy, chow2015risk, tamar16}. 
 In particular, \citet{grimm2021proper, grimm2022approximate} has explored when optimal risk-neutral planning in an approximate model translates to optimal behaviour in the true environment.
However, it is not clear
when this holds for risk-sensitive planning.

In this paper, we propose a framework that consolidates risk-sensitive reinforcement learning and decision-aware model learning.
  Specifically, we address the following question: \emph{if we can perform risk-sensitive planning in an approximate model, does it translate to risk-sensitive behaviour in the true environment?} To this end, our work provides the following contributions:
\begin{itemize}[itemsep=2pt,leftmargin=.23in]
    \item We prove that proper value equivalence only suffices for optimal planning in the risk-neutral case, and the performance of risk-sensitive planning decreases with risk-sensitivity (\cref{sec:veLimitations}).
    \item We introduce the distribution equivalence principle, and show that this suffices for optimal planning with respect to \emph{any} risk measure (\cref{sec:distributionalequivalence}).
    \item We introduce an approximate version of distribution equivalence, which is applicable in practice, that allows one to choose which risk measures they may plan optimally for (\cref{sec:statisticalFunctionalEquivalence}).
    \item We discuss how these methods may be learnt via losses, and how it can be combined with any existing model-free algorithm (\cref{sec:learning}).
    \item We demonstrate our framework empirically in both tabular and continuous domains (\cref{sec:empirical}).
\end{itemize}

\subsection*{Notation}
We write $\probMeasures(\mathcal{Z})$ to represent the set of probability measures on a measurable set $\mathcal{Z}$. For a probability measure $\nu\in\probMeasures(\mathcal{Z})$, we write $X\sim\nu$ to denote a random variable $X$ with law $\nu$, meaning for all measurable subsets $A\subseteq \mathcal{Z}$, $\Prob(X\in A)=\nu(A)$. For a probability measure $\nu\in \probMeasures(\mathcal{Z})$ and a measurable function $f:\mathcal{Z}\to \mathcal{Y}$, the pushforward measure $f_\#\nu\in \probMeasures(\mathcal{Y})$ is defined by $f_\#\nu(Y)=\nu(f^{-1}(Y))$ for all measurable sets $Y\subseteq \mathcal{Y}$. For arbitrary sets $X$ and $Y$, we write $Y^X$ for the space of functions from $X$ to $Y$.

\section{Background}\label{sec:background}

We consider a Markov decision process (MDP) represented as a tuple $(\stateSpace, \actionSpace, \transitionFn, \rewardFn, \gamma)$ where $\stateSpace$ is the state space, $\actionSpace$ is the action space, $\transitionFn:\stateSpace\times \actionSpace \to \probMeasures(\stateSpace)$ is the transition kernel, $\rewardFn:\stateSpace\times \actionSpace \to \realProbMeasures$ is the reward kernel, and $\gamma\in[0,1)$ is the discount factor. We define a policy to be a map $\pi: \stateSpace\to\probMeasures(\actionSpace)$, and write the set of all policies as $\bPi$. 
Given a policy $\pi \in \bPi$, we can sample trajectories $(X_t,A_t,R_t)_{t\geq 0}$, where for all $t\geq 0$, $A_{t}\sim \pi(\cdot\,|\,X_t)$, $R_{t}\sim \rewardFn(X_t,A_t)$, and $X_{t+1}\sim \transitionFn(X_t,A_t)$. For a trajectory from $\pi$ beginning at $X_0=x$, we associate to it the return random variable $G^\pi(x)=\sum_{t\geq 0}\gamma^tR_t$. The expected return across all trajectories starting from a state $x$ is the value function $V^\pi(x)=\Epi[G^\pi(x)]$. The value function is the unique fixed point of the Bellman operator $T^\pi:\R^\stateSpace\to \R^\stateSpace$, defined by 
\begin{equation}\label{eqn:bellmanEq}
T^\pi V(x)\triangleq\mathbbl{E}_{\pi}\left[R+\gamma V(X')\right],    
\end{equation}
where $\mathbbl{E}_\pi$ is written to indicate $A\sim \pi(\cdot\,|\,x)$, $R\sim \rewardFn(x,A)$, and $X'\sim \transitionFn(x,A)$.

\subsection{Model-based reinforcement learning and the value equivalence principle}\label{sec:modelBasedRl}
Estimating \cref{eqn:bellmanEq} in the RL setting is not possible directly, as generally an agent does not have access to $\rewardFn$ nor $\transitionFn$, but only samples from them. There are two common approaches to address this: model-free methods estimate the expectations through the use of stochastic approximation or related methods \citep{sutton88}, while model-based approaches learn an approximate model $\tilde\rewardFn$, $\tilde\transitionFn$ \citep{sutton1991dyna}.

We will refer to a tuple $\tilde{m} = (\tilde{\rewardFn}, \tilde{\transitionFn})$ as a model, and write $\bM$ for the set of all models. In turn, each model $\tilde m$ induces an approximate MDP $(\stateSpace,\actionSpace, \tilde \transitionFn, \tilde \rewardFn, \gamma)$. For a policy $\pi$, we write $T^\pi_{\tilde m}: \R^\stateSpace\to \R^\stateSpace$ for the Bellman operator in this approximate MDP, and we write $V^\pi_{\tilde m}$ for the unique fixed point of this operator. We write $m^*=(\rewardFn, \transitionFn)$ for the true model, and keep $T^\pi=T^\pi_{m^*}$. Throughout the paper, we will write $\M \subseteq\bM$ to represent a set of models which we are considering.

Traditional methods of model-based reinforcement learning learn a model $\tilde m$ using task-agnostic methods such as maximum likelihood estimation \citep{sutton1991dyna, parr2008,oh2015action}. More recent approaches have focused on learning models which are accurate in aspects which are necessary for decision making \citep{farahmand2017value, farahmand2018iterative,schrittwieser2020mastering, grimm2020value, grimm2021proper, arumugam2022}. Of importance to us is \citet{grimm2021proper}, which introduced proper value equivalence, and defined the set $ \M^\infty(\Pi) \triangleq \left\{\tilde m \in \M: V^\pi = V^\pi_{\tilde m}, \;\forall \pi \in \Pi\right\}$. They proved that any model $\tilde m\in \M^\infty(\bPi)$ suffices for optimal planning, that is, a policy which is optimal in $\tilde m$ is also optimal in the true environment, $m^*$.



\vspace{-.5em}

\subsection{Distributional reinforcement learning}
Distributional reinforcement learning \citep{morimura2010parametric,bellemare2017distributional,bdr2022} studies the return $G^\pi$ as a random variable, rather than focusing only on its expectation. For $x\in \stateSpace$, we define the return distribution $\eta^\pi(x)$ as the law of the random variable $G^\pi(x)$. The return distribution is the unique fixed point of the distributional Bellman operator $\T^\pi:\realProbMeasures^\stateSpace\to \realProbMeasures^\stateSpace$ given by
\[\T^\pi\eta(x)\triangleq\mathbbl{E}_{\pi}\left[(b_{R,\gamma})_{\#}\eta(X')\right],\]
where $b_{R,\gamma}:x\mapsto R+\gamma\,x$ and $\mathbbl{E}_{\pi}$ is as in \cref{eqn:bellmanEq}. As was the case in \cref{sec:modelBasedRl}, any approximate model $\tilde m$ induces a distributional Bellman operator $\T^\pi_{\tilde m}$, and we write $\eta^\pi_{\tilde m}$ for the unique fixed point of this operator.

\vspace{-.5em}

\subsection{Risk-sensitive reinforcement learning}\label{sec:riskSensitiveRl}

We define a risk measure to be a function $\rho:\probMeasures_\rho(\R)\to [-\infty,\infty)$, where $\probMeasures_\rho(\R)\subseteq \realProbMeasures$ is its domain.\footnote{We use the definition of risk measure used by \citet{bdr2022}. In earlier financial mathematics literature such as \citet{artzner99}, risk measures were defined as functions of random variables, rather than probability measures. By defining the domain to be a subset of probability measures, we are implicitly considering law-invariant risk measures \citep{kusuoka2001}.}. A classic example is $\rho=\E$, which we refer to as the \emph{risk-neutral} case. When $\rho$ depends on more than only the mean of a distribution, we refer to $\rho$ as being \emph{risk-sensitive}. The area of risk-sensitive reinforcement learning is concerned with maximizing various risk measures of the random return, rather than the expectation as done classically. We now present two examples of commonly used risk measures.

\begin{example}\label{example:meanVariance}
For $\lambda>0$, the mean-variance risk criterion is given by $\rho^\lambda_{\rm MV}(\mu)=\E_{Z\sim \mu}[Z]-\lambda {\rm Var}_{Z\sim \mu}(Z)$ \citep{markowitz1952, di2012policy}. This forms the basis of modern portfolio theory \citep{elton1997modern}.
\end{example}


\begin{example}\label{example:CVaR}
The conditional value at risk at level $\tau\in[0,1]$ is defined as 
\[{\rm CVaR}_\tau(\mu) \triangleq \frac1\tau\int_0^\tau \muQuantile(u)\, du,\]
where $\muQuantile(u) =\inf\{z\in\R: \mu(-\infty,z] \geq u \}$ is the quantile function of $\mu$. If $\muQuantile$ is a strictly increasing function, we equivalently have
\[{\rm CVaR}_\tau(\mu) = \E_{Z\sim \mu}\left[Z\,\big| \, Z\leq \muQuantile(\tau)\right],\]
so that ${\rm CVaR}_\tau(\mu)$ can be understood as the expectation of the lowest $(100\cdot\tau)\%$ of samples from $\mu$.
\end{example}

We say that a policy $\pi^*_\rho$ is optimal with respect to $\rho$ if
\[\;\rho\,(\eta ^{\pi^*_\rho}(x))=\max_{\pi\in \bPi} \rho\,(\eta ^{\pi}(x)), \, \forall x\in \stateSpace.\]
Since we define the space of policies as $\bPi=\probMeasures(\actionSpace)^\stateSpace$, we implicitly only considering the class of stationary Markov policies \citep{puterman2014markov}. For a general risk measure, an optimal policy in this class may not exist \citep{bdr2022}. We discuss more general policies in \cref{sec:policies}.

\section{Limitations of value equivalence for risk-sensitive planning}\label{sec:veLimitations}

\citet{grimm2021proper} proved that any proper value equivalent model is sufficient for optimal risk-neutral planning. In this section, we investigate whether this holds for risk-sensitive planning as well, or is limited to the risk-neutral setting.

\begin{figure}
    \centering
    \begin{subfigure}{.3\columnwidth}
      \centering
    \includegraphics[width=1.02\columnwidth]{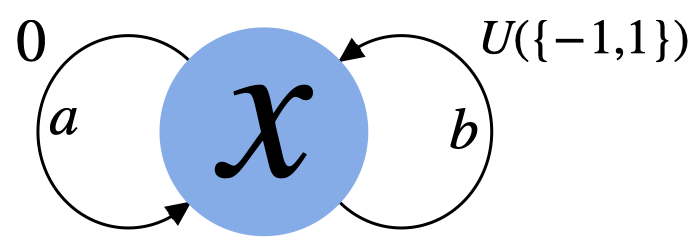}
    \end{subfigure}%
    \begin{subfigure}{.3\columnwidth}
      \centering \includegraphics[width=0.8\columnwidth]{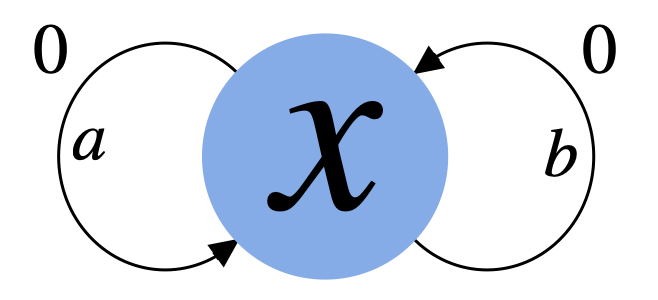}
    \end{subfigure}
    \caption{An MDP with a single state and two actions (left), and a proper value equivalent model $\tilde m$ for it (right).}
    \label{fig:mdpUniformZeroExample}
\end{figure}

As an illustrative example, let us consider the MDP and approximate model $\tilde m$ in \cref{fig:mdpUniformZeroExample}. It is straightforward to verify that $\tilde m$ is a proper value equivalent model for the true MDP, as the value for any policy is 0 in both $\tilde m$ and the true environment. However, for a risk-sensitive agent $\tilde m$ is not sufficient: the variability of return when choosing action $b$ in $m^*$ is much higher than the variability of return when choosing action $b$ in $\tilde m$. Formally, let us fix $\gamma =\frac12$, and let $\pi^b$ be the policy which chooses action $b$ with probability 1. Then $\eta^{\pi^b}(x) = U([-2, 2])$ \citep[Example 2.10]{bdr2022}, while $\eta^{\pi^b}_{\tilde m}(x) = \delta_0$ (where $\delta_x$ refers to the Dirac distribution concentrated at $x$). This difference prevents $\tilde m$ from planning optimally for risk-sensitive risk measures. For example, the optimal policy with respect to $\rho^\lambda_{\rm MV}$ in $m^*$ is to choose $a$ with probability 1, while in $\tilde m$ any policy is optimal. It is straightforward to validate that similar phenomena happen for ${\rm CVaR}_\tau$ when $\tau < 1$.

As demonstrated in the example above, proper value equivalence is not sufficient for planning with respect to the risk measures introduced in \cref{sec:riskSensitiveRl}. We now formalize this, and demonstrate that the only risk measures which proper value equivalence can plan for exactly are those which are functions of expectation.

\begin{restatable}{proposition}{valueEquivRiskMeas}\label{prop:valueEquivRiskMeas}
    Let $\rho$ be a risk measure such that for any MDP and any set $\M$ of models, a policy optimal for $\rho$ for any $\tilde{m}\in \M^\infty(\bPi)$ is optimal in the true MDP. Then $\rho$ must be risk-neutral, in the sense that there exists an increasing function $g:\R\to\R$ such that $\rho(\nu)=g(\E_{Z\sim \nu}[Z])$.  
\end{restatable}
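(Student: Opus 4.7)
I would prove the contrapositive. The key observation is that a model lies in $\M^\infty(\bPi)$ as soon as it matches the true $V^\pi$ for every policy $\pi$, so replacing the stochastic rewards of any MDP by their conditional means yields a proper value equivalent model whose return distributions are Diracs, collapsing every higher-order distributional feature. Any $\rho$ that genuinely depends on more than the mean can therefore be ``fooled'' by such a substitute.

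Suppose for contradiction that $\rho$ is not a function of the mean alone. Then there exist $\nu_1,\nu_2 \in \probMeasures_\rho(\R)$ with equal means $\E_{Z\sim\nu_1}[Z] = \E_{Z\sim\nu_2}[Z] = \mu$ but $\rho(\nu_1) \neq \rho(\nu_2)$; without loss of generality $\rho(\nu_1) > \rho(\nu_2)$. I would construct an MDP with two states $\{x_0,x_\perp\}$ and two actions $\{a,b\}$ available at $x_0$, where taking $a$ (resp.\ $b$) emits a reward $R \sim \nu_1$ (resp.\ $R \sim \nu_2$) and transitions deterministically to the absorbing, zero-reward state $x_\perp$. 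The return distributions at $x_0$ under the pure policies $\pi^a, \pi^b$ are exactly $\nu_1, \nu_2$, and $V^\pi(x_0) = \mu$ for every $\pi \in \bPi$.

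Next, I would let $\tilde m$ be the same MDP but with the rewards at $(x_0,a)$ and $(x_0,b)$ replaced by the deterministic value $\mu$. A direct check shows $V^\pi_{\tilde m} \equiv V^\pi$ on both states for every policy, so $\tilde m \in \M^\infty(\bPi)$, yet $\eta^\pi_{\tilde m}(x_0) = \delta_\mu$ for every $\pi$. Hence every policy -- in particular $\pi^b$ -- is $\rho$-optimal in $\tilde m$. But $\rho(\eta^{\pi^b}(x_0)) = \rho(\nu_2) < \rho(\nu_1) = \rho(\eta^{\pi^a}(x_0))$, so $\pi^b$ is not $\rho$-optimal in the true MDP, contradicting the hypothesis. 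We conclude that $\rho(\nu) = g(\E_{Z\sim\nu}[Z])$ for some $g:\R\to\R$.

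To upgrade ``some $g$'' to ``some increasing $g$,'' I would appeal to the standard monotonicity of risk measures under first-order stochastic dominance: since $\delta_a$ is stochastically dominated by $\delta_b$ whenever $a \leq b$, this forces $g(a) = \rho(\delta_a) \leq \rho(\delta_b) = g(b)$. I expect the main obstacle to be purely the bookkeeping that shows $\tilde m$ matches the true MDP on every policy in $\bPi$, including mixed ones at $x_0$; this reduces to the one-step structure of the MDP, which collapses $V^\pi(x_0)$ to $\mu$ regardless of the mixing probability, after which the contradiction is immediate because $\tilde m$ has erased all distributional information beyond the mean.
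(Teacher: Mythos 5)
Your main construction is essentially the paper's: it also negates the claim to obtain $\nu_1,\nu_2$ with equal means but $\rho(\nu_1)\neq\rho(\nu_2)$, builds a one-step MDP whose two actions emit rewards $\nu_1$ and $\nu_2$, and replaces those rewards by Diracs at the common mean to get a proper value equivalent model in which every policy is $\rho$-optimal while $\pi^b$ is not optimal in $m^*$. (The paper uses a single state with $\gamma=0$ rather than an absorbing sink, but that is cosmetic.) This part of your argument is correct and matches the intended proof.

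The gap is in the upgrade from ``some $g$'' to ``some increasing $g$.'' The paper defines a risk measure as an \emph{arbitrary} function $\rho:\rhoDomain\to[-\infty,\infty)$; monotonicity with respect to first-order stochastic dominance is not among the assumptions, so $\rho(\delta_a)\leq\rho(\delta_b)$ for $a\leq b$ is exactly what you need to prove, not something you may import from the axiomatics of coherent risk measures. The paper instead tries to extract monotonicity from the same transfer hypothesis: it takes $\nu_1,\nu_2$ with $\E_{Z\sim\nu_1}[Z]>\E_{Z\sim\nu_2}[Z]$ but $\rho(\nu_1)<\rho(\nu_2)$ and builds a second pair of proper value equivalent one-step MDPs (true rewards $\nu_1,\nu_2$ versus their deterministic means) to produce a policy optimal in $\tilde M$ but not in $M^*$. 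Be aware that this step is delicate: once you know $\rho=f\circ\E$, the map $\pi\mapsto\rho(\eta^\pi(x))$ coincides in $M^*$ and in any proper value equivalent $\tilde M$, so any contradiction must be engineered more carefully than in the first step. In any case, as written your proof of the increasing part does not go through under the paper's definition of a risk measure.
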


The previous proposition demonstrates that in general, the only risk measures we can plan for in a proper value equivalent model are those which are transformations of the value function. However, it does not address the question of how well proper value equivalent models can be used to plan with respect to other risk measures.

To investigate this question, we turn our attention to a class of risk measures known as \emph{spectral risk measures} \citep{acerbi2002}. Let $\varphi:[0,1]\to\R$ be a non-negative, non-increasing, right-continuous, integrable function such that $\int_0^1\varphi(u)\,du=1$. Then the spectral risk measure corresponding to $\varphi$ is defined as 
\begin{align*}
    \rho_\varphi(\nu) \triangleq \int_0^1 \nuQuantile(u) \varphi(u)du,
\end{align*}
where $\nuQuantile$ is as in \cref{example:CVaR}. Spectral risk measures encompass many common risk measures, for example choosing $\varphi=\1_{[0,1]}$ corresponds to expectation, while $\varphi= \frac{1}{\tau}\1_{[0,\tau]}$ corresponds to ${\rm CVaR}_\tau$.

We say that a spectral risk measure $\rho$ is $(\varepsilon,\delta)$-strictly risk-sensitive if it corresponds to a function $\varphi$ such that $\varphi(\varepsilon)\leq \delta$. This implies that $\varphi$ gives a weight of less than $\delta$ to the top $\varepsilon$ quantiles, and so intuitively a larger $\varepsilon$ and smaller $\delta$ implies a more risk-sensitive risk measure.

With this definition, we now demonstrate that when using proper value equivalent models to plan for strictly risk-sensitive spectral risk measures, there exists a tradeoff between the level of risk-sensitivity and the performance achieved.
\newpage
\begin{restatable}{proposition}{valueEquivRiskSensitive}\label{prop:valueEquivRiskSensitive}
    Let $\rho$ be an $(\varepsilon,\delta)$-strictly risk-sensitive spectral risk measure, and suppose that rewards are almost surely bounded by $R_{\text{max}}$. Then there exists an MDP with a proper value equivalent model $\tilde m$ with the following property: letting $\pi^*_\rho$ be an optimal policy for $\rho$ in the original MDP, and $\tilde{\pi}^*_\rho$ an optimal policy for $\rho$ in $\tilde m$, we have
    \[\inf_{x\in\stateSpace}\left\{\rho\big(\eta ^{\pi^*_\rho}(x)\big)-\rho\big(\eta ^{\tilde{\pi}^*_\rho}(x)\big)\right\} \geq \frac{R_{\text{max}}}{1-\gamma}\,\varepsilon(1-\delta(1-\varepsilon)).\]
\end{restatable}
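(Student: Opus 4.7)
The plan is to exhibit a small MDP with a proper value equivalent approximate model $\tilde m$ for which the planning gap can be computed explicitly. I would build an MDP whose initial state $x_0$ offers two actions: a safe action $a$ giving deterministic reward $0$, and a risky action $b$ that stochastically branches to one of two absorbing tails---a good tail with probability $1-\varepsilon$ paying a constant reward $r_G = \varepsilon R_{\text{max}}/(1-\varepsilon)$ forever, and a bad tail with probability $\varepsilon$ paying $-R_{\text{max}}$ forever. The tuning of $r_G$ enforces a mean-zero return under $\pi_b$, so every policy in the true MDP has value $0$; taking $\tilde m$ to be obtained by replacing every reward by its expectation collapses all returns in $\tilde m$ to the deterministic value $0$, so $\tilde m \in \M^\infty(\bPi)$. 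The regime $\varepsilon > 1/2$ can be handled by the symmetric construction that swaps the roles of good and bad.

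The main computation is to evaluate $\rho$ on the two-point distribution $\eta^{\pi_b}(x_0)$. Its quantile function is a step function breaking at $u = \varepsilon$, giving $\rho(\eta^{\pi_b}(x_0)) = B\Phi_1 + G(1-\Phi_1)$ with $B<0<G$ the bad and good returns and $\Phi_1 := \int_0^\varepsilon \varphi(u)\,du$. The $(\varepsilon,\delta)$-strict risk-sensitivity combined with $\varphi$ non-increasing gives $\int_\varepsilon^1 \varphi \leq \delta(1-\varepsilon)$, hence $\Phi_1 \geq 1-\delta(1-\varepsilon)$. Using this bound together with the PVE mean-zero identity $\varepsilon B + (1-\varepsilon)G = 0$ to eliminate $G$ in favor of $B$, the upper bound on $\rho(\eta^{\pi_b}(x_0))$ simplifies to a negative quantity proportional to $R_{\text{max}}/(1-\gamma)$; pushing $B$ to its extremal value $-R_{\text{max}}/(1-\gamma)$ by carefully splitting $-R_{\text{max}}$ across the first-step transition and the absorbing chunk of the bad tail yields the stated factor $\varepsilon(1-\delta(1-\varepsilon))/(1-\gamma)$, and combining with $\rho(\eta^{\pi_a}(x_0)) = 0$ produces the gap bound.

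The optimality arguments close the proof. Since every policy in $\tilde m$ achieves deterministic return $0$, all policies are optimal for $\rho$ there and I may adversarially set $\tilde\pi^*_\rho = \pi_b$. To confirm $\pi_a$ is optimal for $\rho$ in $m^*$, I would look at any mixed policy $\pi_p$ taking $b$ with probability $p\in[0,1]$: the return distribution has three atoms, its quantile function is piecewise constant, and differentiating $\rho(\eta^{\pi_p})$ in $p$ produces a difference of two values of $\varphi$, which is non-positive because $\varphi$ is non-increasing, so $\rho(\eta^{\pi_p})$ is maximized at $p=0$. The main obstacle is the tight bookkeeping required to realize a bad return as extreme as $-R_{\text{max}}/(1-\gamma)$ while respecting the per-step reward bound and preserving PVE, and to keep the infimum over states non-trivial---e.g., by identifying the absorbing tails with copies of the initial decision structure so that no spectator state contributes a trivially zero gap.
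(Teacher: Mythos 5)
Your overall strategy mirrors the paper's: a single decision state with a safe action of deterministic return $0$ and a risky action with mean-zero return, a proper value equivalent model $\tilde m$ obtained by collapsing rewards to their means (so that every policy is optimal in $\tilde m$ and you may adversarially take $\tilde\pi^*_\rho=\pi_b$), and an upper bound on $\rho(\eta^{\pi_b})$ driven by $\varphi$ being non-increasing with $\varphi(\varepsilon)\le\delta$. The one substantive difference is the shape of the risky return law: the paper uses a uniform distribution $U([-2c,2c])$, whose \emph{linear} quantile function is what produces the factor $\varepsilon(1-\delta(1-\varepsilon))$ in its calculation, whereas you use a two-point law. That substitution is where your argument breaks.

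Carrying out the computation you only sketch: with $\Phi_1=\int_0^\varepsilon\varphi$ and the mean-zero identity $\varepsilon B+(1-\varepsilon)G=0$,
\[
\rho\big(\eta^{\pi_b}(x_0)\big)=B\Phi_1+G(1-\Phi_1)=B\,\frac{\Phi_1-\varepsilon}{1-\varepsilon}\;\le\;B\,\frac{\big(1-\delta(1-\varepsilon)\big)-\varepsilon}{1-\varepsilon}=B(1-\delta),
\]
so with $B=-R_{\text{max}}/(1-\gamma)$ your construction certifies a gap of $\tfrac{R_{\text{max}}}{1-\gamma}(1-\delta)$, \emph{not} $\tfrac{R_{\text{max}}}{1-\gamma}\,\varepsilon(1-\delta(1-\varepsilon))$. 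These constants are incomparable: $1-\delta\ge\varepsilon(1-\delta(1-\varepsilon))$ holds only when $\delta\le(1-\varepsilon)/(1-\varepsilon+\varepsilon^2)$. Outside that range your MDP genuinely fails to witness the proposition; for instance, with $\varepsilon=\delta=0.6$ and the admissible $\varphi=\tfrac{19}{15}\,\1_{[0,0.6)}+0.6\cdot\1_{[0.6,1]}$ (which has $\varphi(\varepsilon)=\delta$), one gets $\Phi_1=0.76$ and an actual gap of $0.4\,|B|$, below the required $0.6\cdot 0.76\,|B|=0.456\,|B|$. So the step ``simplifies to the stated factor'' is false: a two-atom distribution concentrates all quantile weight at a single break point and cannot reproduce the constant that the continuous (uniform) law yields. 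Two secondary but real loose ends remain: the ``symmetric swap'' for $\varepsilon>1/2$ moves the quantile break to $1-\varepsilon$, where $\varphi(1-\varepsilon)\le\delta$ no longer follows from $\varphi(\varepsilon)\le\delta$, so the whole bound must be redone; and your absorbing tail states make the infimum over $\stateSpace$ degenerate unless the construction is reworked --- the paper sidesteps both problems by using a single self-looping state.
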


The fact that we take an infimum over $\stateSpace$ is important to note: there exists an MDP such that for \emph{any} state $x$, the performance gap due to planning in the proper value equivalent model is at least $\frac{R_{\text{max}}}{1-\gamma}\,\varepsilon$. This weakness motivates us to introduce a new notion of model equivalence.

\section{The distribution equivalence principle}\label{sec:distributionalequivalence}

We now introduce a novel notion of equivalence on the space of models, which can be used for risk-sensitive learning. Intuitively, proper value equivalence ensures matching of the \emph{means} of the approximate and true return distributions, which is why it can only produce optimal policies for risk measures which depend on the mean. In order to plan for any risk measure, we leverage the distributional perspective of RL, to partition models based on their \emph{entire} return distribution.

\begin{definition}
    Let $\Pi\subseteq\bPi$ be a set of policies and $\cD\subseteq\realProbMeasures^\stateSpace$ be a set of distribution functions. We say that the space of \emph{distribution equivalent} models with respect to $\Pi$ and $\cD$ is
    \[\Mdist(\Pi,\cD) \triangleq \left\{ \tilde m\in\M: \T^\pi \eta = \T^\pi_{\tilde m}\eta,\;\forall \pi\in\Pi, \eta\in\cD \right\} .\] 
\end{definition}








We can extend this concept to equivalence over multiple applications of the Bellman operator. Following this, for $k\in\N$ we define the order $k$ distribution-equivalence class as 
\begin{align*}
    \M^k_{\text{dist}}(\Pi,\cD) \triangleq \bigg\{\tilde m\in\M: (\T^\pi )^k\eta = (\T^\pi_{\tilde m})^k\eta,\;\forall \pi\in\Pi,\forall \eta\in\cD \bigg\}.
\end{align*}
Taking the limit as $k\to\infty$, we retrieve the set of proper distribution equivalent models.
\begin{definition}
    Let $\Pi\subseteq \bPi$ be a set of policies. We define the set of \emph{proper distribution equivalent} models with respect to $\Pi$ as
    \[\Mdist^\infty(\Pi) \triangleq \left\{ \tilde m\in\M: \eta^\pi_{\tilde m} = \eta^\pi,\;\forall \pi\in\Pi \right\} .\] 
\end{definition}

As discussed in \cref{sec:veLimitations}, models in $\M^\infty(\bPi)$ are sufficient for optimal planning with respect to expectation, but generally not with respect to other risk measures. We now show that proper distribution equivalence removes this problem: choosing a model in $\Mdist^\infty(\bPi)$ is sufficient for optimal planning with respect to \emph{any} risk measure.

\begin{restatable}{theorem}{distributionEquivRiskMeasures}\label{thm:distributionEquivRiskMeasures}
    Let $\rho$ be any risk measure. Then an optimal policy with respect to $\rho$ in $\tilde m \in \Mdist^\infty(\bPi)$ is optimal with respect to $\rho$ in $m^*$.
 \end{restatable}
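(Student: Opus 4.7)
The proof plan is essentially to unfold the definitions: proper distribution equivalence over $\bPi$ guarantees that the entire return distribution agrees between $\tilde m$ and $m^*$ for every policy, and since a risk measure $\rho$ is, by definition, a function of the return distribution alone, the ranking of policies under $\rho$ must coincide in the two MDPs. So the whole argument reduces to chaining one equality and a $\max$.

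Concretely, I would proceed as follows. First, fix $\tilde m\in \Mdist^\infty(\bPi)$ and a state $x\in \stateSpace$. By definition of $\Mdist^\infty(\bPi)$, we have $\eta^\pi_{\tilde m}(x)=\eta^\pi(x)$ for every $\pi\in\bPi$, and hence $\rho(\eta^\pi_{\tilde m}(x))=\rho(\eta^\pi(x))$ for every $\pi$. Taking the maximum over $\pi\in\bPi$ on both sides,
\begin{equation*}
\max_{\pi\in\bPi}\rho(\eta^\pi_{\tilde m}(x))=\max_{\pi\in\bPi}\rho(\eta^\pi(x)).
\end{equation*}
Now let $\tilde\pi^*_\rho$ be any policy that is optimal for $\rho$ in $\tilde m$, meaning $\rho(\eta^{\tilde\pi^*_\rho}_{\tilde m}(x))=\max_{\pi}\rho(\eta^\pi_{\tilde m}(x))$ for all $x$. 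Using proper distribution equivalence once more, $\rho(\eta^{\tilde\pi^*_\rho}(x))=\rho(\eta^{\tilde\pi^*_\rho}_{\tilde m}(x))$, and combining with the displayed equation yields $\rho(\eta^{\tilde\pi^*_\rho}(x))=\max_{\pi\in\bPi}\rho(\eta^\pi(x))$ for every $x$, which is exactly the definition of $\tilde\pi^*_\rho$ being optimal for $\rho$ in $m^*$.

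There is no real obstacle — the whole content of the theorem is packed into the definition of $\Mdist^\infty(\bPi)$, together with the law-invariance of $\rho$ (built into its declared signature $\realProbMeasures\to[-\infty,\infty)$, as emphasised in the footnote following the definition of risk measures). The only subtlety worth flagging in the write-up is the existential one already acknowledged in \cref{sec:riskSensitiveRl}: for a general $\rho$, an optimal stationary Markov policy may fail to exist. The statement should therefore be read conditionally — \emph{if} an optimal $\tilde\pi^*_\rho$ exists in the model $\tilde m$, then it is also optimal in $m^*$ — and under this reading the argument above is complete. If desired, one can make this explicit by noting that since $\eta^\pi_{\tilde m}=\eta^\pi$ for all $\pi\in\bPi$, the two suprema $\sup_{\pi}\rho(\eta^\pi_{\tilde m}(x))$ and $\sup_{\pi}\rho(\eta^\pi(x))$ are attained on exactly the same set of policies, so existence of an optimal policy in either MDP is equivalent and the optimal sets coincide.
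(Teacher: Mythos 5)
Your proof is correct and rests on exactly the same single fact as the paper's: since $\eta^\pi_{\tilde m}=\eta^\pi$ for all $\pi\in\bPi$, the value $\rho(\eta^\pi(x))$ is identical in both MDPs for every policy and state, so optimality transfers. The paper phrases this as a proof by contradiction while you argue directly (and you additionally flag the existence caveat, which the paper defers to its appendix on general policy classes), but the substance is the same.
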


At this point, it appears that distribution equivalence addresses nearly all of the limitations of value equivalence discussed in \cref{sec:veLimitations}. However, the nature of distributions brings inherent challenges, in particular they are infinite dimensional. As a result of this, for computational purposes one must use a parametric family of distributions $\F \subseteq \realProbMeasures$ \citep{rowland2018analysis, dabney2018distributional} to represent return distributions. However, an additional challenge is that the distributional Bellman operator may bring return distributions out of the parametric representation space: for a general $\eta \in \F^\stateSpace$, $\T^\pi \eta \not\in \F^\stateSpace$. Hence, we also require a projection operator\footnote{Further details on the necessity of the projection operator and a discussion of various projections can be found in Chapter 5 of \citet{bdr2022}.} $\Pi_\F:\realProbMeasures^\stateSpace \to \F^\stateSpace$, and in practice we must use $\Pi_\F \T^\pi\eta$. This also implies that it may not be feasible to learn a model $\tilde m$ in $\Mdist^k(\Pi, \cD)$ or $\Mdist^\infty(\Pi)$: they rely on matching $\T^\pi\eta$ or $\eta^\pi$, while one would only have access to $\Pi_\F\T^\pi\eta$ and $\Pi_\F\eta^\pi$. We address this issue next, through the perspective of \emph{statistical functionals}. 



\section{Statistical functional equivalence}\label{sec:statisticalFunctionalEquivalence}

Following the intractability of learning a distribution equivalent model in practice, we now study model equivalence through the lens of \emph{statistical functionals}, a framework introduced by \citet{rowland2019statistics} to describe a variety of distributional reinforcement learning algorithms. We begin with a review of statistical functionals (\cref{sec:statFuncIntro}), and then introduce statistical functional equivalence, demonstrate its equivalence to projected distribution equivalence, and study which risk measures it can plan optimally for (\cref{sec:statFuncEquiv}).

\subsection{Background on statistical functionals}\label{sec:statFuncIntro}
\begin{definition}
    A statistical functional is a function $\psi:\psiDomain\to \R$, where $\psiDomain\subseteq \realProbMeasures$ is its domain. A sketch is a collection of statistical functionals, written as a mapping $\psi:\psiDomain\to \R^m$, where $\psi=(\psi_1,\dots,\psi_m)$, and $\psiDomain=\bigcap_{i=1}^m \psiIDomain$.
\end{definition}

\begin{example}\label{example:mMomentSketch}
Suppose $i>0$, and let $\probMeasures_i(\R)$ be the set of probability measures with finite $i$th moment. Moreover, let $\mu_i(\nu)$ be the $i$th moment of a measure $\nu \in \probMeasures_i(\R)$. Then for $m>0$, the $m$ moment sketch $\psi^m_\mu:\probMeasures_m(\R)\to\R^m$ is defined by $\psi^m_\mu(\nu)= (\mu_1(\nu),\dots,\mu_m(\nu))$.
\end{example}

For a given sketch $\psi$, we define its image as $\psiImage=\{\psi(\nu):\nu\in \psiDomain\}\subseteq \R^m$. An imputation strategy for a sketch $\psi$ is a map $\iota:\psiImage \to \psiDomain$, and can be thought of as an approximate inverse (a true inverse may not exist as $\psi$ is generally not injective). We say $\iota$ is \emph{exact} for $\psi$ if for any $(s_1,\dots,s_m)\in \psiImage$ we have $(s_1,\dots,s_m) = \psi(\iota(s_1,\dots,s_m))$. In general, an exact imputation strategy always exists, however it may not be efficiently computable \citep{bdr2022}.

\begin{example}
Suppose $\psi$ is a sketch given by $\psi(\nu)=\left(\E_{Z\sim\nu}[Z],{\rm Var}_{Z\sim\nu}[Z]\right)$, and $\iota$ is given by $\iota(\mu, \sigma^2)=\mathcal{N}(\mu,\sigma^2)$ (that is, the normal distribution with mean $\mu$ and variance $\sigma^2$). One may verify that $\iota$ is exact, since for any $(\mu,\sigma^2)\in \R^2=\psiImage$, we have $\psi(\iota(\mu,\sigma^2))=(\mu,\sigma^2)$.
\end{example}


We now extend the notion of statistical functionals to return-distribution functions. For $\eta\in \psiDomain^\stateSpace$ we write $\psi(\eta)=(\psi(\eta(x)): x\in\stateSpace)$. We say that a set $\Omega \subseteq \realProbMeasures$ is closed under $\T^\pi$ if whenever $\eta\in \Omega^\stateSpace$, we have $\T^\pi\eta \in \Omega^\stateSpace$.  A sketch $\psi$ is Bellman-closed \citep{rowland2019statistics, bdr2022} if whenever its domain is closed under $\T^\pi$, there exists an operator $\T^\pi_\psi: 
\psiImage^\stateSpace\to \psiImage^\stateSpace$ such that for any $\eta\in \psiDomain^\stateSpace$,
\[\psi(\T^\pi\eta)= \T^\pi_\psi \psi(\eta).\]

We refer to $\T^\pi_\psi$ as the Bellman operator for $\psi$. Similarly to \cref{sec:modelBasedRl}, we denote $\T^\pi_{\psi,\tilde m}$ as the Bellman operator for $\psi$ in an approximate model $\tilde m$.

We will write $s^\pi_\psi = \psi(\eta^\pi)$ as a shorthand, and refer to it as the \emph{return statistic} for a policy $\pi$. If  $\T^\pi_\psi$ exists, then $s^\pi_\psi$ is its fixed point: $s^\pi_\psi= \T^\pi_\psi s^\pi_\psi$. For an approximate model $\tilde m$, we write $s^\pi_{\psi, \tilde m} = \psi(\eta^\pi_{\tilde m})$. We further have $s^\pi_{\psi, \tilde m}= \T^\pi_{\psi,\tilde m} \,s^\pi_{\psi, \tilde m}$, that is, it is a fixed point of the Bellman operator $\T^\pi_{\psi,\tilde m}$.


The task of policy evaluation for a statistical functional $\psi$ is that of computing the value $s^\pi_\psi$. Statistical functional dynamic programming \citep{bdr2022} aims to do this by computing the iterates $s_{k+1}=\psi(\T^\pi\iota(s_k))$, with $s_0\in \psiImage^\stateSpace$ initialized arbitrarily. If $\iota$ is exact and $\psi$ is Bellman-closed, then the updates satisfy $s_k=\psi(\eta_k)$, where $\eta_0=\iota(s_0)$ and $\eta_{k+1}=\T^\pi\eta_k$. 
If $\psi$ is a continuous sketch\footnote{We define this notion in \cref{sec:psiContinuous}.}, then the iterates $(s_k)_{k\geq 0}$ converge to $s_\psi^\pi$.

\subsection{Statistical functional equivalence}\label{sec:statFuncEquiv}
We now introduce a notion of model equivalence through the lens of statistical functionals. Intuitively, this allows us to interpolate between value equivalence and distribution equivalence, as we can choose exactly which aspects of the return distributions we would like to capture.

\begin{definition}
    Let $\psi$ be a sketch, and $\iota$ be an imputation strategy for $\psi$. Let $\cI\subseteq \psiImage^\stateSpace$ and $\Pi\subseteq \bPi$. We define the class of \emph{$\psi$ equivalent models} with respect to $\Pi$ and $\cI$ as
    \begin{align*}
    \M_\psi(\Pi,\cI) \triangleq \bigg\{ \tilde m\in\M:
    \psi\left(\T^\pi \iota(s)\right) = \psi\left(\T^\pi_{\tilde m}\iota(s)\right),\forall \pi\in\Pi, \forall s\in\cI \bigg\} .
    \end{align*}
\end{definition}

In the case that $\psi$ is Bellman-closed and $\iota$ is exact, this set can be described in a form similar to that of value equivalence and distribution equivalence.

\begin{restatable}{proposition}{bellmanClosedExp}\label{prop:bellmanClosedExp}
    If $\psi$ is Bellman-closed and $\iota$ is exact, we have that
    \[\M_\psi(\Pi,\cI)= \left\{ \tilde m\in\M: \T^\pi_\psi s = \T^\pi_{\psi, \tilde m}s,\;\forall \pi\in\Pi,\forall s\in\cI \right\} .\]
\end{restatable}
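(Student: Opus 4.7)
The plan is to prove the set equality by showing the two defining conditions are equivalent for each fixed $\tilde m$, $\pi$, $s$. The argument is essentially a double substitution: apply Bellman-closedness to pull $\psi$ through both Bellman operators, then apply exactness of $\iota$ to collapse $\psi \circ \iota$ to the identity on $I_\psi$.

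More concretely, fix $\pi \in \Pi$ and $s \in \cI$. Since $\iota$ maps into $\psiDomain$ and $\psiDomain$ is closed under $\T^\pi$ (this is implicit in the Bellman-closed hypothesis), the identity defining $\T^\pi_\psi$ yields $\psi(\T^\pi \iota(s)) = \T^\pi_\psi\, \psi(\iota(s))$, applied pointwise over $\stateSpace$. Exactness of $\iota$ then gives $\psi(\iota(s)) = s$, so the right-hand side becomes $\T^\pi_\psi s$. The same chain of reasoning, applied to the Bellman operator of the approximate model $\tilde m$, gives $\psi(\T^\pi_{\tilde m} \iota(s)) = \T^\pi_{\psi,\tilde m} s$. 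Therefore the equation $\psi(\T^\pi \iota(s)) = \psi(\T^\pi_{\tilde m}\iota(s))$ appearing in the definition of $\M_\psi(\Pi,\cI)$ holds if and only if $\T^\pi_\psi s = \T^\pi_{\psi, \tilde m} s$, which is the condition defining the right-hand set.

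Quantifying over all $\pi \in \Pi$ and $s \in \cI$ yields the claimed set equality. There is no real obstacle here; the only thing to check carefully is that Bellman-closedness is being applied to $\iota(s)$, which lies in $\psiDomain^\stateSpace$ by construction, and that one implicitly needs the analogous Bellman-closedness to hold for $\T^\pi_{\tilde m}$ (which is reasonable since Bellman-closedness is a structural property of $\psi$ that is inherited by any distributional Bellman operator induced by a model, provided $\psiDomain$ is closed under that operator as well).
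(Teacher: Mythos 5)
Your proof is correct and follows essentially the same route as the paper's: use Bellman-closedness to rewrite $\psi(\T^\pi\iota(s))$ as $\T^\pi_\psi\,\psi(\iota(s))$, then use exactness of $\iota$ to reduce $\psi(\iota(s))$ to $s$, and apply the identical argument to the approximate model. Your explicit remark that Bellman-closedness must also be invoked for $\T^\pi_{\tilde m}$ is a point the paper's proof leaves implicit, but it does not change the argument.
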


We can extend the above to $k$ applications of the projected Bellman operator, and define the set of order-$k$ $\psi$ equivalent models as
\begin{align*}
    \M^k_\psi(\Pi,\cI) \triangleq \bigg\{ \tilde m\in\M:
    \left(\psi\T^\pi \iota\right)^ks = \left(\psi\T^\pi_{\tilde m}\iota\right)^ks,\forall \pi\in\Pi, \forall s\in\cI \bigg\},
\end{align*}
where $\psi\T^\pi\iota: \psiImage^\stateSpace\to\psiImage^\stateSpace$ is shorthand for $s\mapsto \psi(\T^\pi\iota(s))$. As in \cref{prop:bellmanClosedExp}, if $\psi$ is Bellman-closed and $\iota$ is exact, it holds that 
\begin{align*}
    \M_\psi^k(\Pi,\cI)= \big\{ \tilde m\in\M: (\T^\pi_\psi)^k s = (\T^\pi_{\psi, \tilde m})^k s,\;\forall \pi\in\Pi, \forall s\in\cI \big\} .
\end{align*}

Following \cref{sec:distributionalequivalence}, we can consider the set of models which agree on \emph{return statistics}, and have no dependence on the set $\cI$. However, one difference in the case of statistical functionals is that it is not true in general that this is equal to the limit of $\M_\psi^k(\Pi,\cI)$. Intuitively, this is for the same reason that the iterates $(s_k)_{k\geq 0}$ of statistical functional dynamic programming do not always converge to $s^\pi_\psi$ (\cref{sec:statFuncIntro}). We first introduce the definition of proper statistical functional equivalence, and then demonstrate when it is the limiting set in \cref{prop:kOrderSetConvergence}.

\begin{definition}
    Let $\Pi\subseteq\bPi$ be a set of policies, and $\psi$ be a sketch. We define the class of proper statistical functional equivalent models with respect to $\psi$ and $\Pi$ as
    \[\M^\infty_\psi(\Pi) \triangleq \left\{ \tilde m\in\M: s^\pi_{\psi,\tilde m} = s_\psi^\pi,\; \forall \pi\in\Pi\right\}.\] 
\end{definition}

\begin{restatable}{proposition}{kOrderSetConvergence}\label{prop:kOrderSetConvergence}
    If $\psi$ is both continuous and Bellman-closed and $\iota$ is exact, then\footnote{This is a set theoretic limit, and we review its definition in \cref{defn:setTheoreticConvergence}. Further details can be found in many texts on analysis or probability, for example \citet{resnick1999}.}
    \[\lim_{k\to\infty} \M^k_\psi(\Pi, \cI) = \M_\psi^\infty(\Pi),\text{ for any }\cI\subseteq I_\psi^\stateSpace.\] 
\end{restatable}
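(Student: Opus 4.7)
The plan is to reduce the claim to a statement about iterates of the statistic-level Bellman operators, and then use convergence of statistical functional dynamic programming (SFDP) to identify the limiting set with $\M_\psi^\infty(\Pi)$.

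\textbf{Step 1 (Reduction via Bellman-closedness and exactness).} Since $\iota$ is exact, $\psi(\iota(s)) = s$; combined with Bellman-closedness this gives $\psi(\T^\pi \iota(s)) = \T^\pi_\psi \psi(\iota(s)) = \T^\pi_\psi s$ for every $s \in \psiImage^\stateSpace$. Iterating, $(\psi\T^\pi\iota)^k s = (\T^\pi_\psi)^k s$, and likewise in $\tilde m$. So I would work with the rewriting
\[
    \M_\psi^k(\Pi, \cI) = \bigl\{ \tilde m \in \M : (\T^\pi_\psi)^k s = (\T^\pi_{\psi, \tilde m})^k s,\ \forall \pi \in \Pi,\ \forall s \in \cI \bigr\},
\]
already noted in the paragraph preceding the proposition, which turns the question into one purely about iterates of the statistic-level operators $\T^\pi_\psi$ and $\T^\pi_{\psi,\tilde m}$.

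\textbf{Step 2 (SFDP convergence input).} Next I invoke continuity of $\psi$: as stated at the end of Section~4.1, for any starting point $s \in \psiImage^\stateSpace$ and any $\pi \in \bPi$, the iterates $(\T^\pi_\psi)^k s$ converge to $s^\pi_\psi$ as $k \to \infty$, and analogously $(\T^\pi_{\psi, \tilde m})^k s \to s^\pi_{\psi, \tilde m}$ inside the model $\tilde m$.

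\textbf{Step 3 (Two set-theoretic inclusions).} For $\limsup_k \M_\psi^k(\Pi,\cI) \subseteq \M_\psi^\infty(\Pi)$: if $\tilde m \in \M_\psi^{k_j}(\Pi, \cI)$ along a subsequence $k_j \to \infty$, then fixing any $\pi \in \Pi$ and any $s \in \cI$, equality of the $k_j$-th iterates together with Step~2 gives $s^\pi_\psi = s^\pi_{\psi, \tilde m}$, placing $\tilde m$ in $\M_\psi^\infty(\Pi)$. For the reverse inclusion $\M_\psi^\infty(\Pi) \subseteq \liminf_k \M_\psi^k(\Pi,\cI)$, I would use that $s^\pi_{\psi, \tilde m} = s^\pi_\psi$ forces both iterate trajectories from any $s \in \cI$ to share a common limit, which is the form of agreement required by the set-theoretic convergence notion recorded in the appendix.

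\textbf{Main obstacle.} The delicate step is the $\liminf$ inclusion: having a common fixed point does not in general force distinct operators to agree at any finite iterate (for instance, $s \mapsto s/2$ and $s \mapsto s/3$ both fix $0$ but disagree on every nonzero $s$ and every $k$), so one cannot hope for an exact equality $(\T^\pi_\psi)^k s = (\T^\pi_{\psi,\tilde m})^k s$ to kick in past some $k$. The argument must therefore hinge on the precise set-theoretic convergence definition reviewed in the appendix, and the first thing I would do is pin that definition down before completing the reverse inclusion; I would also verify the edge case $\cI = \emptyset$, where every $\M_\psi^k$ is trivially all of $\M$.
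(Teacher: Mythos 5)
Your Steps 1--2 and your $\limsup$ inclusion follow the same route as the paper, and your version of that forward direction is in fact tighter: the paper reaches it by first claiming $\M^k_\psi(\Pi,\cI)\subseteq\M^{nk}_\psi(\Pi,\cI)$ (``setting both sides to the power $n$''), a step that is itself unjustified for general $\cI$, since agreement of the $k$-th iterates at the points of $\cI$ does not let the two operators continue in lockstep unless the intermediate point $(\psi\T^\pi\iota)^k s$ again lies in $\cI$. Your subsequential argument---membership for infinitely many $k_j$ plus convergence of each full iterate sequence to its own fixed point---avoids that issue entirely, so the half you did prove is sound (modulo your correctly noted degeneracy when $\cI$ or $\Pi$ is empty).

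The genuine gap is the one you flagged yourself: you never establish $\M^\infty_\psi(\Pi)\subseteq\liminf_{k}\M^k_\psi(\Pi,\cI)$, and your operator counterexample ($s\mapsto s/2$ versus $s\mapsto s/3$, common fixed point $0$) explains why no argument from fixed-point agreement alone can close it for arbitrary $\cI$. You should know that the paper does not close it either: its proof declares both reverse inclusions ``immediate to see,'' and that assertion is precisely the step your counterexample attacks. Already in the value-equivalence special case $\psi=\E$, a proper value equivalent model with $\tilde\transitionFn\neq\transitionFn$ gives $(\T^\pi_\psi)^k s-(\T^\pi_{\psi,\tilde m})^k s=\gamma^k\big((P^\pi)^k-(\tilde P^\pi)^k\big)(s-s^\pi_\psi)$, which is nonzero for generic $s\neq s^\pi_\psi$ and every $k$; such a model lies in $\M^\infty_\psi(\bPi)$ but in no $\M^k_\psi(\Pi,\cI)$ for a suitable $\cI$. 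The reverse inclusion does go through when $\cI$ consists of the relevant fixed points, $\cI=\{s^\pi_\psi\}$ per policy---this is exactly the chain of implications used in \cref{prop:properIntersectionOverPolicies}---so the proposition is salvageable under that restriction, but as stated ``for any $\cI\subseteq\psiImage^\stateSpace$'' neither your proposal nor the paper's proof completes the $\liminf$ direction, and your instinct to pin that step down before trusting the result is the right one.
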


\begin{remark}
    Value equivalence \citep{grimm2020value,grimm2021proper} can be seen as a special case of statistical functional equivalence, in the sense that if we choose $\psi=\E$, then we have $\M^k_\psi(\Pi)=\M^k(\Pi)$, for any $\Pi\subseteq \bPi$ and $k\in [1, \infty].$
\end{remark}

{\bf Connection to projected distribution equivalence}

In \cref{sec:distributionalequivalence}, we remarked that distribution equivalence was difficult to achieve in practice, due to the fact that the space $\realProbMeasures^\stateSpace$ was infinite dimensional, and we generally rely on a parametric family $\F$. We now demonstrate that the statistical functional perspective provides us a way to address this.

Let $\psi$ be a sketch and $\iota$ an imputation strategy. These induce the implied representation \citep{bdr2022} given by $\F_\psi =\left\{\iota(s): s\in \psiImage \right\}$, and the projection operator $\Pi_{\F_\psi}: \psiDomain \to \F_\psi$ given by $\Pi_{\F_\psi}=\iota \circ \psi $. We now show that through this construction, we can relate statistical functional model learning to projected distributional model learning with the projection $\Pi_{\F_\psi}$.

\begin{restatable}{proposition}{statisticalProjected}
     Suppose $\iota$ is injective, $\Pi\subseteq \bPi$, $\cI\subseteq \psiImage^\stateSpace$, and let $\cD_\cI=\{\iota(s):s\in\cI\}\subseteq \psiDomain^\stateSpace$. Then     
    \begin{align*}
    \M_\psi(\Pi,\cI)=\bigg\{ \tilde m\in\M: \Pi_{\F_\psi}\T^\pi \eta = \Pi_{\F_\psi}\T^\pi_{\tilde m}\eta,\forall \pi\in\Pi, \forall\eta\in\cD_\cI \bigg\},
    \end{align*}
    \vspace{-5pt}and
    \begin{align*}
    \M^\infty_\psi(\Pi)=\bigg\{ \tilde m\in\M: \Pi_{\F_\psi}\eta^\pi = \Pi_{\F_\psi}\eta^\pi_{\tilde m}, \;\forall \pi\in\Pi \bigg\}.
    \end{align*}
\end{restatable}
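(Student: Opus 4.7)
The plan is to derive both set equalities from a single algebraic observation: for any $\mu,\nu\in\psiDomain$, one has $\psi(\mu)=\psi(\nu)$ if and only if $\Pi_{\F_\psi}\mu = \Pi_{\F_\psi}\nu$. The forward direction is immediate by applying $\iota$ to both sides of $\psi(\mu)=\psi(\nu)$. The converse is exactly where the injectivity hypothesis on $\iota$ enters: $\iota(\psi(\mu))=\iota(\psi(\nu))$ forces $\psi(\mu)=\psi(\nu)$. Extending this coordinate-wise over $x\in\stateSpace$ yields the analogous equivalence for any two elements of $\psiDomain^\stateSpace$, which is the form we will use since $\T^\pi\iota(s)$, $\T^\pi_{\tilde m}\iota(s)$, $\eta^\pi$ and $\eta^\pi_{\tilde m}$ all live there.

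For the first equality, I would apply the observation with $\mu=\T^\pi\iota(s)$ and $\nu=\T^\pi_{\tilde m}\iota(s)$ for each $s\in\cI$ and each $\pi\in\Pi$, turning the defining condition of $\M_\psi(\Pi,\cI)$ into the projected-Bellman condition $\Pi_{\F_\psi}\T^\pi\iota(s)=\Pi_{\F_\psi}\T^\pi_{\tilde m}\iota(s)$. Reindexing the universal quantifier via the bijection $s\mapsto\iota(s)$ between $\cI$ and $\cD_\cI$ (which is a bijection precisely because $\iota$ is injective) rewrites the quantifier $\forall s\in\cI$ as $\forall\eta\in\cD_\cI$, giving the set on the right-hand side.

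For the second equality, I would apply the same observation with $\mu=\eta^\pi$ and $\nu=\eta^\pi_{\tilde m}$. Using the definitions $s^\pi_\psi=\psi(\eta^\pi)$ and $s^\pi_{\psi,\tilde m}=\psi(\eta^\pi_{\tilde m})$, the equality $s^\pi_\psi=s^\pi_{\psi,\tilde m}$ is equivalent to $\Pi_{\F_\psi}\eta^\pi=\Pi_{\F_\psi}\eta^\pi_{\tilde m}$; quantifying over $\pi\in\Pi$ delivers the claim.

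No step is a genuine obstacle here beyond unpacking definitions; the only mild subtlety is to check that the objects being projected actually lie in $\psiDomain^\stateSpace$ so that $\Pi_{\F_\psi}=\iota\circ\psi$ is defined on them. This holds for $\iota(s)$ directly because $\iota:\psiImage\to\psiDomain$, is preserved after applying $\T^\pi$ or $\T^\pi_{\tilde m}$ under the standing assumption that $\psiDomain$ is closed under the distributional Bellman operators of interest, and holds for $\eta^\pi$, $\eta^\pi_{\tilde m}$ under the usual presupposition that return-distribution functions take values in $\psiDomain$.
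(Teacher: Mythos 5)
Your proof is correct and follows essentially the same route as the paper's: both reduce each equality to the observation that $\psi(\mu)=\psi(\nu)$ iff $\iota(\psi(\mu))=\iota(\psi(\nu))=\Pi_{\F_\psi}\nu$ (the converse direction being exactly where injectivity of $\iota$ enters), combined with reindexing the quantifier via $s\mapsto\iota(s)$ between $\cI$ and $\cD_\cI$. Your closing remark about verifying that the projected objects actually lie in $\psiDomain^\stateSpace$ is a point the paper leaves implicit, but it does not alter the argument.
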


{\bf Risk-sensitive learning}

We now study which risk measures we can plan optimally for using a model in $\M_\psi^\infty(\bPi)$. Intuitively, we will not be able to plan optimally for all risk measures (as was the case in \cref{thm:distributionEquivRiskMeasures}), since this set only requires models to match the aspects of the return distribution captured by $\psi$. Indeed, we now show that the choice of $\psi$ exactly determines which risk measures can be planned for.

\begin{restatable}{proposition}{statFuncRisk}\label{prop:statFuncRisk}
    Let $\rho$ be a risk measure and let $\psi=(\psi_1,\dots,\psi_m)$ be a sketch, and suppose that $\rho$ is in the span of $\psi$, in the sense that there exists $\alpha_0,\dots\alpha_m\in\R$ such that for all $\nu\in \psiDomain\cap\rhoDomain$, $\rho(\nu)=\sum_{i=1}^m\alpha_i\psi_i(\nu)+\alpha_0.$ Then any optimal policy with respect to $\rho$ in $\tilde m \in \M_\psi^\infty(\bPi)$ is optimal with respect to $\rho$ in $m^*$.
\end{restatable}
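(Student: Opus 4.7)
The plan is to observe that $\M_\psi^\infty(\bPi)$ is defined precisely so that the statistical functional values $s^\pi_\psi$ match exactly between $\tilde m$ and $m^\ast$ across all policies, and then to show that when $\rho$ lies in the affine span of the components of $\psi$, the risk measure values must match as well. Because the risk values of every policy coincide in the two models, the corresponding optimization problems have the same objective function over $\bPi$, and any policy optimal in $\tilde m$ is automatically optimal in $m^\ast$.

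First I would fix an arbitrary $\tilde m\in \M_\psi^\infty(\bPi)$ and unpack the definition: for every $\pi\in\bPi$ we have $s^\pi_{\psi,\tilde m}=s^\pi_\psi$, which by the shorthand $s^\pi_\psi=\psi(\eta^\pi)$ means $\psi_i(\eta^\pi_{\tilde m}(x))=\psi_i(\eta^\pi(x))$ for all $x\in\stateSpace$ and all $i=1,\dots,m$. Next I would invoke the hypothesis that $\rho=\sum_{i=1}^m \alpha_i\psi_i+\alpha_0$ on $\psiDomain\cap\rhoDomain$. Applying this pointwise in $x$ to the return distributions $\eta^\pi(x)$ and $\eta^\pi_{\tilde m}(x)$ and using the matching of each $\psi_i$ then gives
\[
\rho\bigl(\eta^\pi_{\tilde m}(x)\bigr)=\sum_{i=1}^m \alpha_i\psi_i(\eta^\pi_{\tilde m}(x))+\alpha_0=\sum_{i=1}^m \alpha_i\psi_i(\eta^\pi(x))+\alpha_0=\rho\bigl(\eta^\pi(x)\bigr),
\]
for every $\pi\in\bPi$ and $x\in\stateSpace$.

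Finally, I would conclude the argument directly from the definition of an optimal policy in \cref{sec:riskSensitiveRl}. If $\tilde\pi^\ast_\rho$ is optimal for $\rho$ in $\tilde m$, then for each $x$ it achieves $\max_{\pi\in\bPi}\rho(\eta^\pi_{\tilde m}(x))$; but by the equality above, this maximum coincides with $\max_{\pi\in\bPi}\rho(\eta^\pi(x))$ and is attained by $\tilde\pi^\ast_\rho$ in $m^\ast$ as well, giving optimality in the true model.

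The only real subtlety, rather than a genuine obstacle, is the implicit regularity check that $\eta^\pi(x)$ and $\eta^\pi_{\tilde m}(x)$ lie in $\psiDomain\cap\rhoDomain$ so that both $\psi$ and $\rho$ are simultaneously defined; this is already tacit in the formulations of $s^\pi_\psi$ and of optimal $\rho$-policies, so invoking it will be a one-line remark rather than a technical hurdle. Beyond that, the proof is a direct consequence of linearity and the defining equality of $\M_\psi^\infty(\bPi)$, with no need for induction over Bellman iterates or any appeal to Bellman-closedness or exactness of $\iota$.
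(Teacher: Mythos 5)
Your proof is correct and uses the same key idea as the paper's: the defining equality $s^\pi_{\psi,\tilde m}=s^\pi_\psi$ for all $\pi\in\bPi$ combined with the affine representation $\rho=\sum_i\alpha_i\psi_i+\alpha_0$ forces $\rho(\eta^\pi_{\tilde m}(x))=\rho(\eta^\pi(x))$ for every policy and state, so the two optimization problems coincide. The only cosmetic difference is that you argue directly that the objectives agree, whereas the paper runs the same chain of equivalences inside a proof by contradiction.
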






\section{Learning statistical functional equivalent models}\label{sec:learning}

We now analyze how we may learn models in these classes in practice. As we have introduced a number of concepts and spaces of models, we only discuss here the spaces of models that are used in the empirical evaluation which follow, and we discuss the remainder of the spaces in \cref{sec:appendixLearning}.

We focus on the case of learning a proper $\psi$-equivalent model. Such a model must satisfy $s^\pi_\psi = (\psi \T^\pi_{\tilde m }\iota)^k s^\pi_\psi$ for any policy $\pi$ (\cref{prop:properIntersectionOverPolicies}), so that we can construct a loss by measuring the amount that this equality is violated by. However, the size of $\bPi$ is exponential in $|\stateSpace|$, so we can approximate this by only measuring the amount of violation over a subset of policies $\Pi \subseteq \bPi$. We can now formalize this concept as a loss.

\begin{definition}
    Let $\psi$ be a sketch, $\iota$ an imputation strategy. We define the loss for learning a proper-$\psi$ equivalent model as 
    \[\cL_{\psi,\Pi,\infty}^{k}(\tilde m)\triangleq \sum_{\pi\in\Pi}\left\|s^\pi_\psi - \left(\psi\T^\pi_{\tilde m} \iota\right)^ks^\pi _\psi\right\|_2^2.\]
    If $\psi$ is Bellman-closed this can be written without the need for $\iota$, by replacing $\psi\T^\pi_{\tilde m} \iota$ with $\T^\pi_{\psi,\tilde m}$.
\end{definition}

This loss is amenable to tabular environments, as it requires knowledge of $s^\pi_\psi$, which can be learnt approximately using statistical functional dynamic programming. Despite this, the above approach can be further adapted to the deep RL setting, which we now discuss, and describe how our approach can be combined with existing model-free risk-sensitive algorithms.

We will assume the existence of a model-free risk-sensitive algorithm which satisfies the following properties: (i) it learns a policy $\pi$ using a replay buffer $\cD$, and (ii) it learns an approximate statistical functional function $s_{\psi,\omega}^\pi$ (for example, any algorithm based upon C51 \citep{bellemare2017distributional} or QR-DQN \citep{dabney2018distributional} satisfies these assumptions), where we write $\omega$ to refer to the set of parameters it depends on, and to emphasize its difference with the true return statistic $s^\pi_\psi$. We will introduce a loss which learns an approximate model $\tilde{m}$, which can then be combined with the replay buffer $\cD$ to use both experienced transitions and modelled transitions to learn $\pi$, as was done by e.g. \citet{sutton1991dyna} or \citet{janner2019trust}. Following this, for a learnt model $\tilde{m}$ we introduce the approximate loss
\begin{equation*}
    \cL_{\cD, \psi,\omega}(\tilde{m}) = \E_{\substack{(x,a,r,x')\sim \cD\\ \tilde{x}' \sim \tilde{m}(\cdot|x,a)}}\left[(s_{\psi,\omega}^\pi(x') - s_{\psi,\omega}^\pi(\tilde{x}'))^2 \right].
\end{equation*}


\section{Empirical evaluation}\label{sec:empirical}

We now empirically study our framework, and examine the phenomena discussed in the previous sections. We focus on two sets of experiments: the first is in tabular settings where we use dynamic programming methods to perform an analysis without the noise of gradient-based learning. The second builds upon \citet{lim2022distributional}, where we augment their model-free algorithm with our framework, and evaluate it on an option trading environment. We discuss training and environments details in \cref{sec:appendixEmpirical}, and provide additional results in \cref{sec:additionalEmpirical}. We provide the code used to run our experiments at \href{https://github.com/tylerkastner/distribution-equivalence}{github.com/tylerkastner/distribution-equivalence}.

\subsection{Experimental details}\label{sec:procedure}

\textbf{Tabular experiments}

For each environment, we learn an MLE model, a proper value equivalent model using the method introduced in \citet{grimm2021proper}, and a $\psi^2_\mu$ equivalent model using  $\cL^{k}_{\psi_\mu^2,\Pi,\infty}$, where $\psi^m_\mu$ is the first $m$ moment functional (cf. \cref{example:mMomentSketch}), and $\Pi$ is a set of 1000 randomly sampled policies. For each model, we performed CVaR value iteration \citep{bdr2022}, and further performed CVaR value iteration in the true model, to produce three policies. We repeat the learning of the models across 20 independent seeds, and report the performance of the policies in \cref{fig:empiricalresults} (Left).

\textbf{Option trading}

\citet{lim2022distributional} introduced a modification of QR-DQN which attempts to learn CVaR optimal policies, that they evaluate on an option trading environment \citep{chow2014algorithms, tamar2017}. We augment their method using the method described in \cref{sec:learning}, and we learn optimal policies for 10 CVaR levels between 0 and 1. We compare our adapted method to their original method as well as their original method adapted with a PVE model \citep{grimm2021proper}, and discuss implementation details in \cref{sec:appendixEmpirical}. In particular, we evaluate the models in a low-sample regime, so the sample efficiency gains of using a model are apparent.

\begin{figure}
    \centering
    \begin{subfigure}{.5\columnwidth}
      \centering
    \includegraphics[width=1\columnwidth]{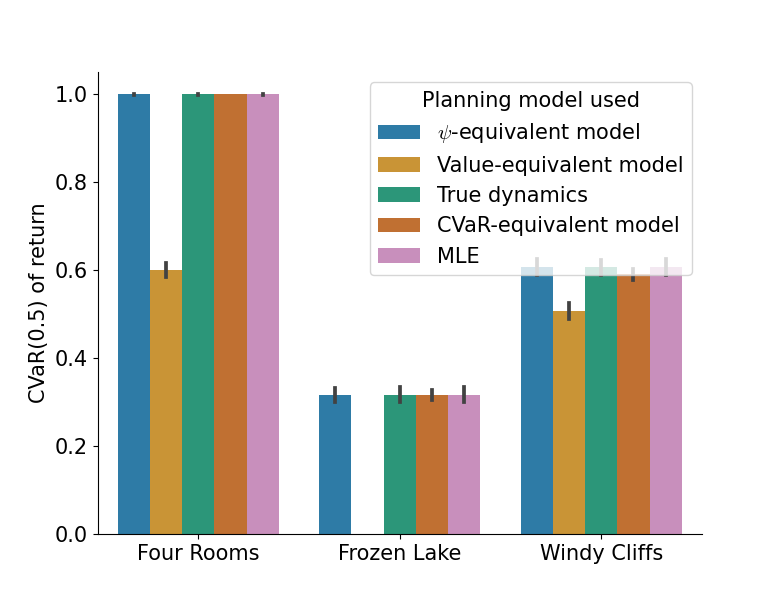}
    \end{subfigure}%
    \begin{subfigure}{.5\columnwidth}
      \centering \includegraphics[width=1.1\columnwidth]{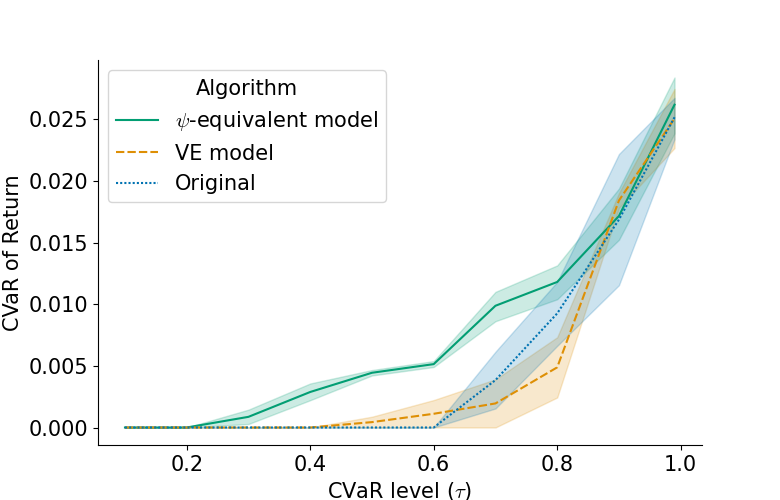}
    \end{subfigure}
    \caption{\textbf{Left}: CVaR(0.5) of returns obtained across the three tabular environments. We computed the values across 1000 trajectories from each of the 20 learnt models. Error bars indicate 95\% confidence intervals. The orange bar for Frozen Lake appears missing because the value obtained is 0. \textbf{Right}: CVaR of returns for the policies learnt in the option trading environment for various CVaR levels after 10,000 environment interactions. Shaded regions indicate 95\% confidence intervals across 10 independent seeds.}
    \label{fig:empiricalresults}
\end{figure}

\begin{figure}
\begin{center}
\begin{subfigure}{.48\columnwidth}
    \includegraphics[width=1.2\columnwidth]{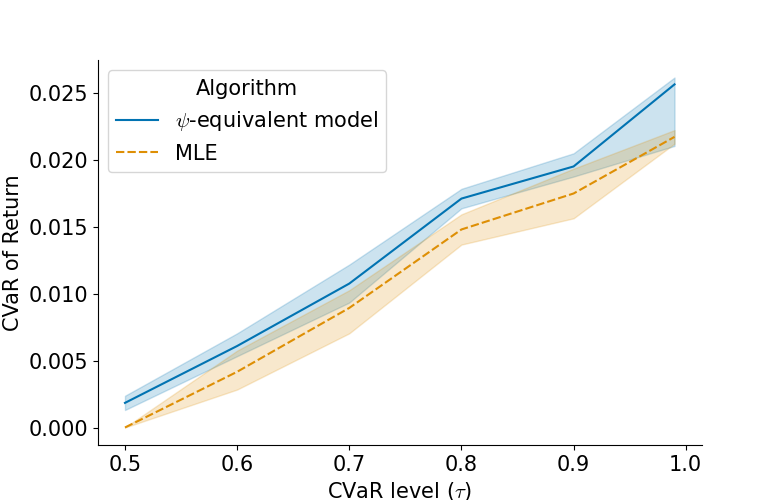}
\end{subfigure}
\begin{subfigure}{.48\columnwidth}
    \includegraphics[width=1.2\columnwidth]{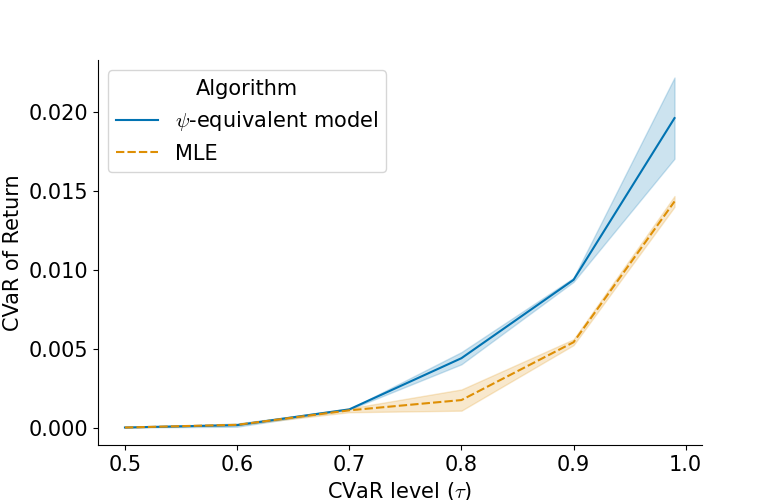}
\end{subfigure}
\end{center}
\caption{CVaR of returns for policies learnt in the option trading environment using a $\psi$-equivalent model and a MLE-based model as the number of distracting dimensions increases (\textbf{left}: 2 distracting dimensions, \textbf{right}: 6 distracting dimensions).}
\label{fig:distractingDimensions}
\end{figure}



\subsection{Discussion}\label{sec:resultsDiscussion}




In \cref{fig:empiricalresults} (Left), we can see that across all three tabular environments, planning in a proper statistical functional equivalent model achieves stronger results over planning in a  proper value equivalent model.  This provides an empirical demonstration of \cref{prop:valueEquivRiskSensitive} and \cref{prop:statFuncRisk}: proper value equivalence is limited in its ability to plan risk-sensitively, while risk-sensitive planning in a statistical functional equivalent model approximates risk-sensitive planning in the true environment.

In \cref{fig:empiricalresults} (Right), we can see that \citet{lim2022distributional}'s algorithm augmented with a statistical functional equivalent model achieved significantly improved performance for all CVaR levels below $\tau \approx 0.8$. The fact that our augmentation improves upon the original method reflects the improved sample efficiency which comes from using an approximate model for planning. This difference is more apparent for lower values of $\tau$, which demonstrates the phenomenon that learning more risk-sensitive policies are less sample efficient \citep{greenberg2022efficient}. On the other hand, the method augmented with the PVE model could have the same sample efficiency gains from using an approximate model (as it is trained on the same number of environment interactions), so the fact that it is not performant for lower values of CVaR is a demonstration of \cref{prop:valueEquivRiskSensitive}: the more risk-sensitive the risk measure being planned for, the more the performance is affected.

Compared to MLE-based methods, our approach focuses on learning the aspects of the environment which are most relevant for risk-sensitive planning. This phenomenon is reflected in \cref{fig:distractingDimensions}, where the performance gap between the MLE model and the $\psi$-equivalent model grows with the amount of uninformative complexity in the environment.

\section{Conclusion}\label{sec:conclusion}

In this work, we studied the intersection of model-based reinforcement learning and risk-sensitive reinforcement learning. We demonstrated that value-equivalent approaches to model learning produce policies which can only plan optimally for the risk-neutral setting, and in risk-sensitive settings their performance degrades with the level of risk being planned for. Similarly, we argued that MLE-based methods are insufficient for efficient risk-sensitive learning as they focus on all aspects of the environment equally. We then introduced distributional model equivalence, and demonstrated that distribution equivalent models can be used to plan for any risk measure, however they are intractable to learn in practice. To account for this, we introduced statistical functional equivalence; an equivalence which is parameterized by the choice of a statistical functional. We proved that the choice of statistical functional exactly determines which risk measures can be planned for optimally, and provided a loss with which these models can be learnt. We further described how our method can be combined with any existing model-free risk-sensitive algorithm, and augmented a recent model-free distributional risk-sensitive algorithm with our model. We supported our theory with empirical results, which demonstrated our approach's advantages over value-equivalence and MLE based models over a range of environments..




\section{Acknowledgements}

We would like to thank the members of the Adaptive Agents Lab who provided feedback on a draft of this paper. AMF acknowledges the funding from the Canada CIFAR AI Chairs program, as well as the support of the Natural Sciences and Engineering Research Council of Canada (NSERC) through the Discovery Grant program (2021-03701). MAE was supported by NSERC Grant [2019-06167], CIFAR AI Chairs program, and CIFAR AI Catalyst grant. Resources used in preparing this research were provided, in part, by the Province of Ontario, the Government of Canada through CIFAR, and companies sponsoring the Vector Institute. Finally, we thank the anonymous reviewers (both ICML 2023 and NeurIPS 2023) for providing feedback which improved the paper.

\newpage

\bibliography{dist_equiv}
\bibliographystyle{icml2022}

\appendix
\onecolumn

\section{Additional results}\label{sec:appendixAdditional}



\subsection{Additional properties of statistical functional equivalence}

We begin by discussion some additional properties of statistical functional equivalence.


\begin{proposition}\label{prop:mMomentContinuous}
    If the reward from a state is almost surely bounded, the $m$-moment functional $\psi^m_\mu$ is continuous on return distributions.
\end{proposition}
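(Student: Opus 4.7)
The plan is to exploit the boundedness assumption to restrict attention to return distributions supported on a common compact interval, where each moment is the expectation of a bounded Lipschitz function and is therefore stable under any of the standard metrics on probability measures (Wasserstein, L\'evy--Prokhorov, or weak convergence on a compact set).

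First I would observe that if rewards are almost surely bounded by $R_{\text{max}}$, then for every policy $\pi$ and state $x$ the return $G^\pi(x)=\sum_{t\geq 0}\gamma^t R_t$ satisfies $|G^\pi(x)|\leq V_{\text{max}}$ with $V_{\text{max}}=R_{\text{max}}/(1-\gamma)$. Hence every return distribution $\eta^\pi(x)$ is supported on the compact interval $K=[-V_{\text{max}},V_{\text{max}}]$, and it suffices to establish continuity of $\psi^m_\mu$ on $\probMeasures(K)$.

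Next, for each $i\in\{1,\dots,m\}$, the monomial $z\mapsto z^i$ restricted to $K$ is Lipschitz with constant $L_i=iV_{\text{max}}^{i-1}$ (by the mean value theorem, since $|iz^{i-1}|\leq iV_{\text{max}}^{i-1}$ on $K$). Applying Kantorovich--Rubinstein duality, I would bound
\[|\mu_i(\nu)-\mu_i(\nu')| = \left|\int z^i\,d\nu - \int z^i\,d\nu'\right| \leq L_i\, W_1(\nu,\nu')\]
for all $\nu,\nu'\in\probMeasures(K)$. Stacking coordinates, $\psi^m_\mu$ is then Lipschitz (and therefore continuous) with respect to $W_1$ on $\probMeasures(K)$, with constant at most $\bigl(\sum_{i=1}^m L_i^2\bigr)^{1/2}$ in the Euclidean norm on $\R^m$.

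The only real obstacle is matching the paper's own definition of a continuous sketch, which is deferred to \cref{sec:psiContinuous}. Any reasonable choice of metric should suffice: if the definition uses $W_p$ for some $p\geq 1$, then $W_1\leq W_p$ on $\probMeasures(K)$ and the bound above transfers directly; if it uses weak convergence, the Portmanteau theorem supplies $\int z^i\,d\nu_n\to \int z^i\,d\nu$ since $z^i$ is continuous and bounded on $K$. Extending from $\probMeasures(K)$ to return-distribution functions $\eta\in\probMeasures(K)^\stateSpace$ is immediate by acting coordinate-wise in $x\in\stateSpace$.
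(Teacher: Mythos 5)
Your proof is correct, but its primary route differs from the paper's. The paper's notion of continuity (defined in \cref{sec:psiContinuous}) is sequential continuity with respect to the product of the weak topologies on $\realProbMeasures$, and its proof proceeds by citing a result of Billingsley: weak convergence plus uniform integrability of $(\nu_n)$ implies convergence of the $m$th moments, with uniform integrability supplied for free by the common compact support $[-R_{\text{max}}/(1-\gamma), R_{\text{max}}/(1-\gamma)]$. Your main argument instead gives a quantitative Lipschitz bound $|\mu_i(\nu)-\mu_i(\nu')|\leq i V_{\text{max}}^{i-1} W_1(\nu,\nu')$ via Kantorovich--Rubinstein; this is strictly more informative (it yields an explicit modulus of continuity rather than bare sequential continuity), but it targets a metric the paper never adopts, so for the statement as the paper intends it the operative branch is your hedge about weak convergence. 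That branch is essentially the paper's argument in disguise: both reduce to the observation that on a common compact support the monomial $z\mapsto z^i$ acts like an admissible test function. The one small imprecision there is that the paper's weak convergence is tested against $C_b(\R)$, and $z^i$ is unbounded on $\R$; you should either truncate $z^i$ continuously outside $K$ (which leaves all the integrals unchanged since every measure involved is supported on $K$) or invoke the uniform-integrability criterion as the paper does. With that one-line repair, and the coordinate-wise extension to $\probMeasures(K)^\stateSpace$ you already note, your argument is complete.
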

\begin{proof}
Let $R_{\text{max}}$ be the maximum absolute reward from a state, equivalently let us suppose the support of $\rewardFn(\cdot,\cdot)$ is a subset of $[-R_{\text{max}}, R_{\text{max}}]$. Then for any $x\in \stateSpace$ and any policy $\pi$ we have that the support of $\eta^\pi(x)$ is a subset of $[-R_{\text{max}}/(1-\gamma), R_{\text{max}}/(1-\gamma)]$.

We now leverage a result of \citet{bill86}, which states that for any $m>0$, if a sequence of measures $(\nu_n)_{n\geq 0}\subseteq \realProbMeasuresmMoments$ is uniformly integrable, then $\mu_m(\nu_n)$ converges to $\mu_m(\nu)$ in $\R$ whenever $(\nu_n)_{n\geq 0}$ weakly converges to $\nu$. Applying this in our setting, we first fix $x\in\stateSpace$, then by the previous paragraph we have that the support of each $\eta_n(x)$ is a subset of the interval $[-R_{\text{max}}/(1-\gamma), R_{\text{max}}/(1-\gamma)]$, which implies uniform integrability of $(\eta_n(x))_{n\geq 0}$. Hence we have that $\mu_m(\eta_n(x))\to \mu_m(\eta)$, which gives convergence of $\psi^m_\mu$.
\end{proof}

\begin{example}\label{example:mMomentMeanVar}
    Let $\psi_\mu^m$ be the sketch of the first $m$ moments (\cref{example:mMomentSketch}). Then by \cref{prop:statFuncRisk}, whenever $m\geq 2$ we have that any proper $\psi_\mu^m$ equivalent model is sufficient for optimal planning with respect to the mean-variance risk criterion $\rho^\lambda_{\rm MV}$ (\cref{example:meanVariance}).
\end{example}

We now demonstrate that the $m$-moment sketch $\psi_\mu^m$ introduced in \cref{example:mMomentSketch} suffices for risk-sensitive learning with respect to a large collection of risk measures.


\begin{restatable}{proposition}{mMomentBellmanClosed}\label{prop:mMomentBellmanClosed}
Suppose $\psi=(\psi_1,\dots,\psi_m)$ is a Bellman-closed sketch and for each $i=1,\dots,m$, $\exists f_i:\R\to \R$ such that for each $\nu\in\psiDomain$, $\psi_i(\nu)=\E_{Z\sim \nu} [f_i(Z)]$. Then any risk measure $\rho$ which can be planned for exactly using a proper $\psi$ equivalent model can be planned for exactly using a proper $\psi^m_\mu$ equivalent model.
\end{restatable}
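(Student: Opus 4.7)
The plan is to establish the set inclusion $\M^\infty_{\psi^m_\mu}(\bPi) \subseteq \M^\infty_\psi(\bPi)$. Once this inclusion is in place, the conclusion follows immediately: if every model in $\M^\infty_\psi(\bPi)$ admits exact risk-sensitive planning for $\rho$, then every model in the smaller set $\M^\infty_{\psi^m_\mu}(\bPi)$ does as well, since such a model also lies in $\M^\infty_\psi(\bPi)$. So the whole proof reduces to the structural question: does matching the first $m$ moments of every return distribution force agreement under $\psi$?

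The key claim that drives the inclusion is that each $f_i$ must be a polynomial of degree at most $m$. This is an instance of the characterization of Bellman-closed sketches of expectation form due to \citet{rowland2019statistics}. At a high level, consider the finite-dimensional vector space $V = \mathrm{span}\{1,f_1,\dots,f_m\}$, which has dimension at most $m+1$. Bellman-closedness forces $V$ to be invariant under the family of affine substitutions $f(\cdot)\mapsto f(r + \gamma\,\cdot)$ as $r$ ranges over the reward support. A standard argument using enough distinct substitution values then forces elements of $V$ to be polynomials, and the dimension bound caps the degree at $m$. This step is where I expect the main technical subtlety to lie; I would cite the relevant theorem from \citet{rowland2019statistics} and confirm that its hypotheses on the richness of the reward support are compatible with our setup.

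Granted this structural claim, each $\psi_i(\nu) = \E_{Z\sim\nu}[f_i(Z)]$ becomes a linear combination of $1,\mu_1(\nu),\dots,\mu_m(\nu)$. Hence if two return-distribution functions $\eta,\eta'$ satisfy $\psi^m_\mu(\eta(x)) = \psi^m_\mu(\eta'(x))$ for all $x \in \stateSpace$, then $\psi(\eta(x)) = \psi(\eta'(x))$ for all such $x$ as well. Applying this with $\eta = \eta^\pi$ and $\eta' = \eta^\pi_{\tilde m}$ for arbitrary $\pi \in \bPi$, we obtain $s^\pi_{\psi,\tilde m} = s^\pi_\psi$ whenever $s^\pi_{\psi^m_\mu,\tilde m} = s^\pi_{\psi^m_\mu}$, which is exactly the desired inclusion $\M^\infty_{\psi^m_\mu}(\bPi) \subseteq \M^\infty_\psi(\bPi)$.

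Combining the two steps, the transfer of the planning property from $\psi$ to $\psi^m_\mu$ is immediate. Note that no appeal to \cref{prop:statFuncRisk} is needed in either direction: the argument runs purely at the level of set containment between proper equivalence classes, leveraging only the polynomial characterization of Bellman-closed expectation sketches as its sole non-trivial input.
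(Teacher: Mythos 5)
Your proof is correct, but it takes a genuinely different route from the paper's. The paper invokes the theorem of \citet{rowland2019statistics} to write each $\psi_i$ as an affine combination of the first $m$ moment functionals, $\psi_i(\nu)=\sum_j b_{ij}\mu_j(\nu)+b_{i0}$, then asserts that plannability of $\rho$ via proper $\psi$-equivalent models means $\rho(\nu)=\sum_i\alpha_i\psi_i(\nu)+\alpha_0$, substitutes to express $\rho$ in the span of $\psi^m_\mu$, and closes by applying \cref{prop:statFuncRisk}. You use the same structural input (the polynomial/affine-in-moments characterization of Bellman-closed expectation sketches) but deploy it to prove the set inclusion $\M^\infty_{\psi^m_\mu}(\bPi)\subseteq\M^\infty_\psi(\bPi)$, after which the transfer of the planning property is immediate by monotonicity under containment. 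Your route has a concrete advantage: it proves the literal statement without needing the converse of \cref{prop:statFuncRisk} (that plannability implies membership in the span of $\psi$), which the paper's proof tacitly assumes but never establishes --- that proposition only gives the forward direction. The paper's route, in exchange, produces an explicit affine representation of $\rho$ in terms of moments, which is more constructive if one wants to actually carry out the planning. One small caveat on your side: the containment argument silently requires the return distributions of the candidate models to lie in both $\psiDomain$ and $\probMeasures_m(\R)$ so that both sketches are defined; this is the same domain bookkeeping the paper also glosses over, and it is harmless under the bounded-reward assumptions used elsewhere (cf.\ \cref{prop:mMomentContinuous}).
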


\subsection{On the continuity of statistical functionals}\label{sec:psiContinuous}
In \cref{sec:statisticalFunctionalEquivalence}, we said that a sketch $\psi$ was continuous if whenever a sequence $(\nu_n)_{n\geq 0}\subseteq\psiDomain$ converges to $\nu\in\psiDomain$, we have that $\psi(\nu_n)$ converges to $\psi(\nu)$. We now formalize this notion. We will use various quantities from topology, \citet{munkres2000topology} may be used a reference for further details.

We will write $C_b(\R)$ for the set of bounded continuous functions from $\R$ to $\R$. We recall that a sequence of measures $(\nu_n)_{n\geq 0}\subseteq \realProbMeasures$ converges weakly to $\nu\in \realProbMeasures$ if 
\[\int f d\nu_n \to \int f d\nu,\]
for all $f\in C_b(\R)$. We refer to the topology induced by this convergence as the \emph{weak} topology on $\realProbMeasures$ (to be precise, specifying convergence is sufficient to induce the entire topology since this topology is metrizable).

With this definition, we endow $\realProbMeasures^\stateSpace$ with the product topology generated by the weak topology on $\realProbMeasures$. Then by definition of the product topology, a sequence $(\eta_n)_{n\geq 0}\subseteq \realProbMeasures^\stateSpace$ converges to $\eta\in \realProbMeasures^\stateSpace$ if and only if for each $x\in\stateSpace$, $(\eta_n(x))_{n\geq 0}$ converges weakly to $\eta(x)$ (note this is weak convergence in $\realProbMeasures$).

We can now define a sketch $\psi:\psiDomain\to \R^m$ to be (sequentially) continuous if whenever a sequence $(\eta_n)_{n\geq 0}\subseteq \psiDomain^\stateSpace$ converges to $\eta\in \psiDomain^\stateSpace$ with the topology we defined above, we have that $\psi(\eta_n)$ converges to $\psi(\eta)$ in the usual topology on $\R^m$.

To see that this continuity of $\psi$ implies convergence of the iterates $(s_k)_{k\geq 0}$ to $s^\pi_\psi$, we can recall that we had $s_k=\psi(\eta_k)$, where $\eta_0=\iota(s_0)$ and $\eta_{k+1}=\T^\pi\eta_k$. The sequence $(\eta_k)_{k\geq0}$ converges to $\eta^\pi$ in the weak product topology on $\realProbMeasures^\stateSpace$ \citep{bdr2022}, which then immediately gives that if $\psi$ is continuous as above, $\psi(\eta_k)\to \psi(\eta^\pi)$, and hence $s_k\to s^\pi_\psi$.

\subsection{Additional empirical results}\label{sec:additionalEmpirical}
Similar to \cref{fig:distractingDimensions}, we performed an additional experiment in the tabular domains, where we limit the model capacity by constraining the rank of transition matrix being estimated. In this regime, the model should ideally use its limited capacity to model the aspects of the environment which are most important for the decision problem. We report the results of this experiment in \cref{fig:restrictingRank}.

\begin{figure}
\hspace*{-.5cm}
\begin{subfigure}{.32\columnwidth}
    \includegraphics[width=1.3\columnwidth]{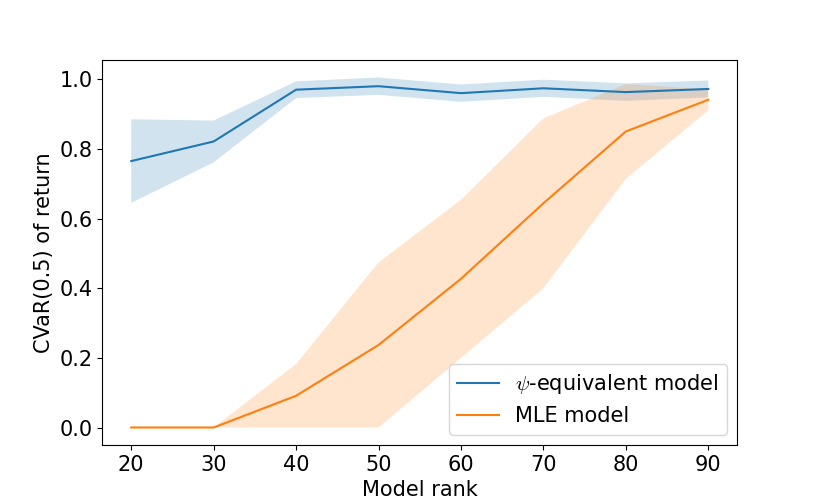}
\end{subfigure}
\begin{subfigure}{.32\columnwidth}
    \includegraphics[width=1.3\columnwidth]{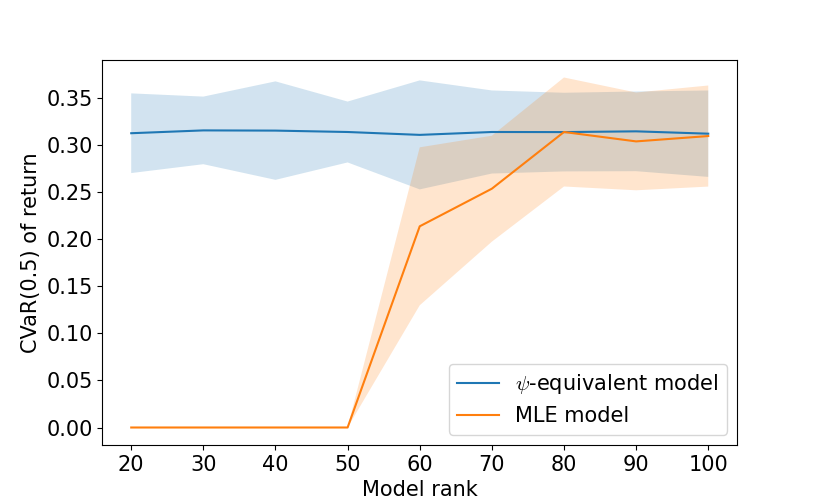}
\end{subfigure}
\begin{subfigure}{.32\columnwidth}
    \includegraphics[width=1.3\columnwidth]{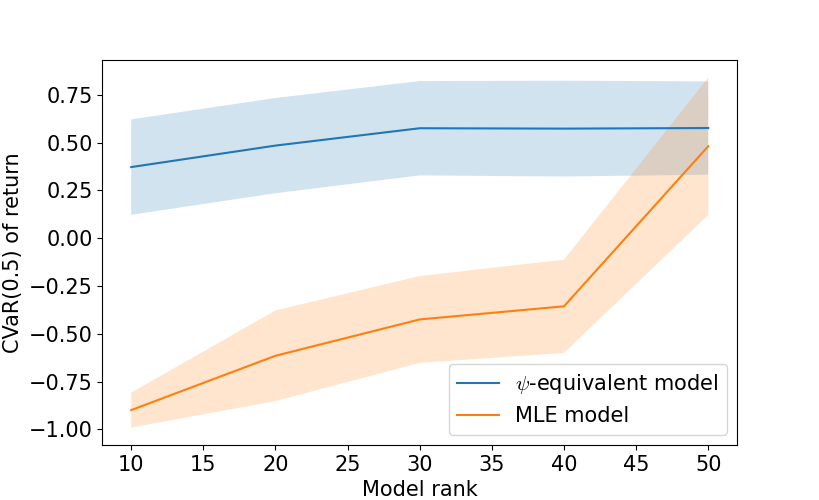}
\end{subfigure}
\caption{CVaR of returns for policies learnt using a $\psi$-equivalent model and a MLE-based model as the number of distracting dimensions increases (\textbf{left}: Four Rooms, \textbf{middle}: Frozen Lake, \textbf{right}: Windy Cliffs.}
\label{fig:restrictingRank}
\end{figure}

\section{Learning statistical functional equivalent models}\label{sec:appendixLearning}

To learn a model $\tilde m\in \M^k(\Pi, \cI)$, we can define the loss of a model as the total deviation from the definition of $\M^k(\Pi, \cI)$. To this end, we define
    \[\cL_{\psi,\Pi,\cI}^{k,p}(\tilde m)\triangleq\sum_{\pi\in\Pi}\sum_{s\in \cI } \left\| \left(\psi\T^\pi \iota\right)^ks - \left(\psi\T_{\tilde m}^\pi \iota\right)^ks\right\|_p^p,\]
where for $s=(s_1,\dots,s_m)\in\R^m$, $\|s\|_p^p=\sum_{i=1}^m|s_i|^p$. If the Bellman operator for $\psi$ exists and is readily available, we can alternatively define the loss working directly on statistics, without needing to impute into distribution space:
\[\cL_{\psi,\Pi,\cI}^{k,p}(\tilde m)=\sum_{\pi\in\Pi}\sum_{s\in \cI } \left\|(\T^\pi_\psi)^k s - (\T^\pi_{\psi, \tilde{m}})^k s\right\|_p^p.\]

To learn a proper value equivalent model, \citet{grimm2021proper} leverages the fact that for any $k\in \N$ the proper value equivalent class can be deconstructed into an intersection of one proper value equivalent class per policy it matches over: 
\[\M^\infty(\Pi)=\bigcap_{\pi \in \Pi}\M^k\left(\{\pi\}, \left\{V^\pi\right\}\right),\]
so that minimizing $\left|V^\pi - (T^\pi_{\tilde m})^k V^\pi\right|$ across all $\pi\in \Pi$ is sufficient to learn a model in $\M^\infty(\Pi)$. We now show that the same argument can be used to learn proper statistical functional equivalent models.

\begin{restatable}{proposition}{properIntersectionOverPolicies}\label{prop:properIntersectionOverPolicies}
    If $\psi$ is both continuous and Bellman-closed and $\iota$ is exact, for any $k\in \N$ and $\Pi \subseteq \bPi$, it holds that
    \[\M^\infty_\psi(\Pi)=\bigcap_{\pi \in \Pi}\M^k_\psi\left(\{\pi\}, \left\{s^\pi_{\psi}\right\}\right).\]
\end{restatable}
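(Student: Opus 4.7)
}
The plan is to show both inclusions by leveraging the characterization of $\M^k_\psi$ available under Bellman-closedness and exactness of $\iota$ (the analogue of \cref{prop:bellmanClosedExp}), namely
\[\M^k_\psi\bigl(\{\pi\},\{s^\pi_\psi\}\bigr)=\bigl\{\tilde m\in\M:(\T^\pi_\psi)^k s^\pi_\psi=(\T^\pi_{\psi,\tilde m})^k s^\pi_\psi\bigr\},\]
together with the facts that $s^\pi_\psi$ is the fixed point of $\T^\pi_\psi$, that $s^\pi_{\psi,\tilde m}$ is the fixed point of $\T^\pi_{\psi,\tilde m}$, and that continuity of $\psi$ guarantees convergence of statistical functional dynamic programming iterates in \emph{any} MDP (the true one and those induced by approximate models alike).

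For the forward inclusion, I would take $\tilde m\in\M^\infty_\psi(\Pi)$ and fix $\pi\in\Pi$. By definition $s^\pi_{\psi,\tilde m}=s^\pi_\psi$, so $s^\pi_\psi$ is a fixed point of both $\T^\pi_\psi$ and $\T^\pi_{\psi,\tilde m}$. Therefore $(\T^\pi_\psi)^k s^\pi_\psi=s^\pi_\psi=(\T^\pi_{\psi,\tilde m})^k s^\pi_\psi$, which places $\tilde m$ in $\M^k_\psi(\{\pi\},\{s^\pi_\psi\})$; intersecting over $\pi\in\Pi$ gives the containment. This step is routine.

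The harder direction is the reverse inclusion, and it is where the continuity hypothesis does all the work. Let $\tilde m$ belong to the intersection on the right, fix $\pi\in\Pi$, and use the characterization above plus $(\T^\pi_\psi)^k s^\pi_\psi=s^\pi_\psi$ to deduce
\[(\T^\pi_{\psi,\tilde m})^k s^\pi_\psi=s^\pi_\psi.\]
Now consider the sequence generated by statistical functional dynamic programming in the model $\tilde m$ initialized at $s^\pi_\psi$, i.e.\ $s_0=s^\pi_\psi$ and $s_{n+1}=\T^\pi_{\psi,\tilde m} s_n$. By the convergence result for continuous sketches recalled in \cref{sec:statFuncIntro} (applied to the MDP induced by $\tilde m$), we have $s_n\to s^\pi_{\psi,\tilde m}$. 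On the other hand, the displayed identity above implies $s_{nk}=s^\pi_\psi$ for every $n\ge 0$, so this constant subsequence also converges to $s^\pi_\psi$. Uniqueness of limits then forces $s^\pi_\psi=s^\pi_{\psi,\tilde m}$, and since $\pi\in\Pi$ was arbitrary we obtain $\tilde m\in\M^\infty_\psi(\Pi)$.

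The main obstacle is conceptual rather than computational: one must resist the temptation to conclude $s^\pi_\psi=s^\pi_{\psi,\tilde m}$ directly from $s^\pi_\psi$ being a fixed point of $(\T^\pi_{\psi,\tilde m})^k$, since $\T^\pi_{\psi,\tilde m}$ is not assumed to be a contraction and its $k$-th iterate could in principle have multiple fixed points. The continuity of $\psi$ (together with exactness of $\iota$ and Bellman-closedness, so the iterates satisfy $s_n=\psi(\eta_n)$ with $\eta_{n+1}=\T^\pi_{\tilde m}\eta_n$) is precisely what rules this out, by selecting $s^\pi_{\psi,\tilde m}$ as the unique limit point reachable from any initialization. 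This is also exactly why the analogous statement fails without continuity, paralleling the failure of $\M^k_\psi(\Pi,\cI)\to\M^\infty_\psi(\Pi)$ noted in \cref{prop:kOrderSetConvergence}.
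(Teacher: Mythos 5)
Your proposal is correct and follows essentially the same route as the paper's proof: the forward inclusion via the fixed-point property of $s^\pi_\psi$ under both operators, and the reverse inclusion by iterating $(\T^\pi_{\psi,\tilde m})^{k}s^\pi_\psi=s^\pi_\psi$ to get $(\T^\pi_{\psi,\tilde m})^{nk}s^\pi_\psi=s^\pi_\psi$ for all $n$ and then invoking continuity of $\psi$ (together with Bellman-closedness and exactness of $\iota$) to identify the limit with $s^\pi_{\psi,\tilde m}$. Your explicit remark that one cannot conclude directly from $s^\pi_\psi$ being a fixed point of $(\T^\pi_{\psi,\tilde m})^k$, and your phrasing via a constant subsequence of the dynamic-programming iterates, are just slightly more careful articulations of the same argument.
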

With this in mind, we can now propose a loss for learning proper statistical functional equivalent models.

\begin{definition}
    Let $\psi$ be a sketch and $\iota$ an imputation strategy. We define the loss for learning a proper $\psi$ equivalent model as 
    \[\cL_{\psi,\Pi,\infty}^{k,p}(\tilde m)\triangleq \sum_{\pi\in\Pi}\left\|s^\pi_\psi - \left(\psi\T^\pi_{\tilde m} \iota\right)^ks^\pi _\psi\right\|_p^p.\]
    If $\psi$ is Bellman-closed this loss can be written in terms of its Bellman operator, given by
    \[\cL_{\psi,\Pi,\infty}^{k,p}(\tilde m)=\sum_{\pi\in\Pi}\left\|s^\pi_\psi - (\T^\pi_{\psi,\tilde m})^k s^\pi_\psi \right\|_p^p.\]
    \end{definition}

\section{Proofs}\label{sec:appendixProofs}

\subsection{Section 3 Proofs}
\valueEquivRiskMeas*
\begin{proof}
    To begin, note that this condition implies that for probability measures $\nu_1,\nu_2$ with  $\E_{Z_1\sim\nu_1}[Z_1]=\E_{Z_2\sim\nu_2}[Z_2]$, it must hold that $\rho(\nu_1)=\rho(\nu_2)$. To see why, suppose this weren't the case. Then there exists a pair of probability measures $\nu_1,\nu_2\in \realProbMeasures$ such that $\E_{Z_1\sim\nu_1}[Z_1]=\E_{Z_2\sim\nu_2}[Z_2]$, and $\rho(\nu_1)<\rho(\nu_2)$. Then let us construct an MDP $M^*$ where $\stateSpace=\{x\}$, $\actionSpace = \{a,b\}$, $\gamma=0$, $\rewardFn(x,a)=\nu_1$, $\rewardFn(x,b)=\nu_2$ ($\transitionFn$ is defined implicitly since there is a single state). Moreover let us define a second MDP $\tilde M$ defined by $\stateSpace=\{x\}$, $\actionSpace = \{a,b\}$, $\gamma=0$, $\rewardFn(x,a)=\E_{Z_1\sim\nu_1}[Z_1]$, $\rewardFn(x,a)=\E_{Z_2\sim\nu_2}[Z_2]$. Then it is immediate to see that $M^*$ and $\tilde M$ are proper value equivalent, however the policy $\pi^a$ defined by $\pi^a(a\,|\,x)=1$ is optimal in $\tilde M$, but not in $M^*$, contradicting the original statement.
    
    This in turn implies that $\rho(\nu)=f(\E_{Z\sim\nu}[Z])$ for some function $f$. It remains to show that $f$ must be increasing. To see this, suppose not: then there exists $\mu_1,\mu_2\in \realProbMeasures$ such that $\E_{Z_1\sim\nu_1}[Z_1]>\E_{Z_2\sim\nu_2}[Z_2]$ but $\rho(\nu_1)<\rho(\nu_2)$. Then we can construct another pair of MDPs: $M^*$ is defined by setting $\stateSpace=\{x\}$, $\actionSpace = \{a,b\}$, $\gamma=0$, $\rewardFn(x,a)=\nu_1$, $\rewardFn(x,b)=\nu_2$, and $\tilde M$ is defined by $\stateSpace=\{x\}$, $\actionSpace = \{a,b\}$, $\gamma=0$, $\rewardFn(x,a)=\E_{Z_1\sim\nu_1}[Z_1]$, $\rewardFn(x,a)=\E_{Z_2\sim\nu_2}[Z_2]$. Then once again we can see that $M^*$ and $\tilde{M}$ are proper value equivalent, but the  the policy $\pi^a$ defined by $\pi^a(a\,|\,x)=1$ is optimal in $\tilde M$, but not in $M^*$, giving us our contradiction.

    Hence we must have that $\rho(\nu)=f(\E_{Z\sim\nu}[Z])$ for some increasing function $f$, as desired.

\end{proof}

\valueEquivRiskSensitive*
\begin{proof}
    Let $\varphi$ be the function which $\rho$ corresponds to (so that $\rho(\mu)=\int_0^1 F^{-1}_\mu(u)\,\varphi(u)\,du$). As $\rho$ is strictly risk-sensitive, let $\varepsilon,\delta$ be such that $\varphi(\varepsilon)\leq \delta$. Next, note that since $\varphi$ is constrained to be positive, non-increasing, and integrating to 1, we have that
    \begin{align*}
        \int_0^1 F^{-1}_\mu(u)\,\varphi(u)\,du &= \int_0^{1-\varepsilon} F^{-1}_\mu(u)\,\varphi(u)\,du+\int_{1-\varepsilon}^1 F^{-1}_\mu(u)\,\varphi(u)\,du\\
        &\leq \frac{1}{1-\varepsilon}\int_0^1 F^{-1}_\mu(u)\,\mathbbm{1}_{[0, 1-\varepsilon]}\,du+\delta \int_0^1 F^{-1}_\mu(u)\,du\\
        &= \frac{1}{1-\varepsilon}\int_0^{1-\varepsilon} F^{-1}_\mu(u)\,du+\delta \int_{1-\varepsilon}^1  F^{-1}_\mu(u)\,du.
    \end{align*}

    \begin{figure}[h]
        \centering
        \begin{subfigure}{.3\columnwidth}
            \centering
            \includegraphics[width=1.02\columnwidth]{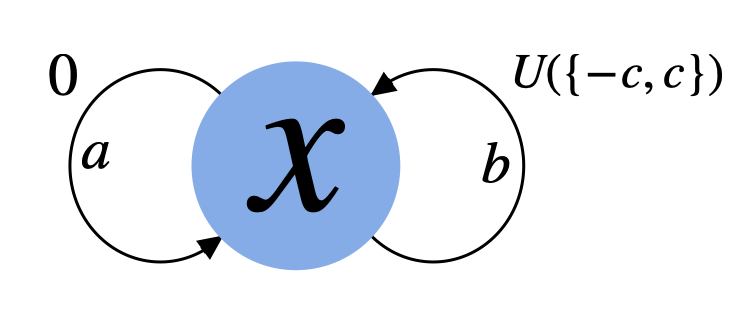}
        \end{subfigure}%
        \begin{subfigure}{.3\columnwidth}
            \centering \includegraphics[width=0.8\columnwidth]{figs/mdp_example_0.png}
        \end{subfigure}
        \caption{An MDP $m^*$ (left) and a proper value equivalent model $\tilde m$ (right).}
        \label{fig:mdpExampleProof}
    \end{figure}

    Let us now consider the MDPs $m^*$ and $\tilde m$ as given in \cref{fig:mdpExampleProof}. Following Example 2.10 in \citet{bdr2022}, we have that $\eta^{\pi^b}(x)=U([-2c, 2c])$, so that $F^{-1}_{\eta^{\pi^b}(x)}(u)=4cu-2c$. We can use this to calculate
    \begin{align*}
        \rho(\eta^{\pi^b}(x)) &= \int_0^1 F^{-1}_\mu(u)\,\varphi(u)\,du \\
        &\leq \frac{1}{1-\varepsilon}\int_0^{1-\varepsilon} F^{-1}_{\eta^{\pi^b}(x)}(u)\,du + \delta \int_{1-\varepsilon}^1  F^{-1}_{\eta^{\pi^b}(x)}(u)\,du\\
        &= \frac{1}{1-\varepsilon}\int_0^{1-\varepsilon} (4cu-2c)\,du+ \delta \int_{1-\varepsilon}^1  (4cu-2c)\,du\\
        &= -2c\varepsilon(1-\delta(1-\varepsilon)).
    \end{align*}
    With this calculation done, we can remark that $\pi^a$ is optimal in $m^*$, as we have $\rho(\pi^a(x))=0$. Moreover, $\pi^b$ is an optimal policy in $\tilde m$, as $\rho(\pi_{\tilde m}^a(x))=\rho(\pi_{\tilde m}^b(x))=0$. 

    We can then see that 
    \[\rho(\pi^a(x))-\rho(\pi^b(x))\geq 2c\varepsilon(1-\delta(1-\varepsilon)),\]
    which completes the proof.
\end{proof}

\subsection{Section 4 Proofs}

\distributionEquivRiskMeasures*
\begin{proof}
        Let $\pi_\rho^*$ be an optimal policy for $\rho$ in $m^*$, and let $\tilde{\pi}_\rho^*$ be an optimal policy for $\rho$ in $\tilde{m}$. For contradiction, suppose that $\tilde{\pi}_\rho^*$ is not optimal in $m^*$. Then for all $x\in\stateSpace$ we have that 
    \[\rho(\eta^{\tilde{\pi}_\rho^*}(x))\leq \rho(\eta^{\pi_\rho^*}(x)),\]
    and for at least one $x\in\stateSpace$ we have 
    \[\rho(\eta^{\tilde{\pi}_\rho^*}(x))< \rho(\eta^{\pi_\rho^*}(x)).\]
    Let us choose this $x$, and note that this implies
    \begin{alignat*}{2}
        &\qquad& \rho\left(\eta^{\tilde{\pi}_\rho^*}(x)\right)&< \rho\left(\eta^{\pi_\rho^*}(x)\right)\\
         \iff&& \rho\left(\eta^{\tilde{\pi}_\rho^*}_{\tilde m}(x)\right)&< \rho\left(\eta^{\pi_\rho^*}_{\tilde m}(x)\right),
    \end{alignat*}
    since by assumption of $\tilde m \in \Mdist^\infty(\bPi)$ we have that $\eta^{\pi}=\eta_{\tilde m}^{\pi}$ for any $\pi\in\bPi$. But this contradicts the assumption that $\tilde{\pi}_\rho^*$ was optimal for $\rho$ in $\tilde{m}$, and we are complete.
\end{proof}

\subsection{Section 5 Proofs}

\bellmanClosedExp*
\begin{proof}
    Recall that 
    \[\M_\psi(\Pi,\cI)= \bigg\{ \tilde m\in\M:
    \psi\left(\T^\pi \iota(s)\right) = \psi\left(\T^\pi_{\tilde m}\iota(s)\right)\; \forall \pi\in\Pi, s\in\cI \bigg\}.\]
    Note that $\psi\left(\T^\pi \iota(s)\right)= \T^\pi_\psi \psi(\iota(s))$ since $\psi$ is Bellman-closed, and since $\iota$ is exact we have that $\psi(\iota(s))=s$. Combining these we have that $\psi\left(\T^\pi \iota(s)\right)=\T^\pi_\psi s$, which then gives us equality of the sets as desired.
\end{proof}

\begin{definition}\label{defn:setTheoreticConvergence}
Let $(A_k)_{k=1}^\infty$ be a sequence of sets. Then we have 
\[\liminf_{k\to\infty}A_k=\bigcup_{k\geq 1}\bigcap_{j\geq k} A_j  \text{, and } \limsup_{k\to\infty}A_k=\bigcap_{k\geq 1}\bigcup_{j\geq k} A_j.\]
If both of these sets are equal, then we say that $\lim_{k\to\infty} A_k$ exists and is equal to that common set.
\end{definition}

{\renewcommand\footnote[1]{}\kOrderSetConvergence*}

\begin{proof}
We can begin by recalling that for $k>0$,
\[
    \M^k_\psi(\Pi,\cI)= \bigg\{ \tilde m\in\M:
    \left(\psi\T^\pi \iota\right)^ks = \left(\psi\T^\pi_{\tilde m}\iota\right)^ks\;\;\forall \pi\in\Pi, s\in\cI \bigg\},
\]
We can also note that if $\tilde{m} \in\M^k_\psi(\Pi,\cI)$, then $\tilde{m} \in\M^{nk}_\psi(\Pi,\cI)$ for $n>0$, since if $\left(\psi\T^\pi \iota\right)^ks = \left(\psi\T^\pi_{\tilde m}\iota\right)^ks$, then by setting both sides to the power $n$ we have that $\left(\psi\T^\pi \iota\right)^{nk}s = \left(\psi\T^\pi_{\tilde m}\iota\right)^{nk}s$. This implies that $\M^k_\psi(\Pi,\cI)\subseteq \M^{nk}_\psi(\Pi,\cI)$. Since this is true for any $n$, we can set $n\to\infty$ to obtain 
\[\M^k_\psi(\Pi,\cI) \subseteq  \bigg\{ \tilde m\in\M:
    \lim_{n\to\infty}\left(\psi\T^\pi \iota\right)^{nk}s = \lim_{n\to\infty}\left(\psi\T^\pi_{\tilde m}\iota\right)^{nk}s\;\;\forall \pi\in\Pi, s\in\cI \bigg\}. \]
Since $\iota$ is exact and $\psi$ is Bellman-closed, we have that for any $n\geq 0$, $\left(\psi\T^\pi \iota\right)^{nk}s = \psi ((\T^\pi)^{nk}\iota(s))$ (Proposition 8.9 in \citet{bdr2022}). Since $\psi$ is continuous, we have that $\psi ((\T^\pi)^{nk}\iota(s))\to s^\pi$ as $n\to\infty$ (justification for this can be found in \cref{sec:psiContinuous}). We can then use this to rewrite the above as
\[\M^k_\psi(\Pi,\cI) \subseteq  \bigg\{ \tilde m\in\M:
    s^\pi_\psi = s^\pi_{\psi,\tilde m}\;\;\forall \pi\in\Pi, s\in\cI \bigg\}=\M^\infty(\Pi). \]
This immediately gives us that
\begin{align*}
    \bigcup_{j\geq k} \M^{j}_\psi(\Pi,\cI)\subseteq\M^\infty(\Pi). 
\end{align*}
Since this expression is independent of $k$, we can take the intersection over all $k$ to see that
\[\limsup_{k\to\infty}\Mdist^k(\Pi)=\bigcap_{k\geq1}\bigcup_{j\geq k} \M^{j}_\psi(\Pi,\cI)\subseteq\M^\infty(\Pi). \]
Moreover it is immediate to see that 
\[\M^\infty(\Pi)\subseteq \bigcap_{k\geq1}\bigcup_{j\geq k} \M^{j}_\psi(\Pi,\cI),\]
which together gives us
\[\limsup_{k\to\infty}\Mdist^k(\Pi)=\Mdist^\infty(\Pi).\]

We now focus on the limit inferior. We take $k>0 $, and see that
\begin{align*}
    \bigcap_{j\geq k} \M^{j}_\psi(\Pi,\cI) &= \left\{\tilde m\in\M:
    \left(\psi\T^\pi \iota\right)^js = \left(\psi\T^\pi_{\tilde m}\iota\right)^js\;\;\forall j\ge k,\pi\in\Pi, s\in\cI \right\}\\
    &\subseteq \left\{\tilde m\in\M:
    \lim_{j\to\infty}\left( \psi\T^\pi \iota\right)^js = \lim_{j\to\infty} \left(\psi\T^\pi_{\tilde m}\iota\right)^js\;\;\pi\in\Pi, s\in\cI \right\}.
\end{align*}
As argued above for the limit superior, we have that $\lim_{j\to\infty }\psi ((\T^\pi)^j\iota(s))=s^\pi_\psi$. Using this fact in the original expression above, we have
\begin{align*}
    \left\{\tilde m\in\M:
    \lim_{j\to\infty}\left( \psi\T^\pi \iota\right)^js = \lim_{j\to\infty} \left(\psi\T^\pi_{\tilde m}\iota\right)^js\;\;\pi\in\Pi, s\in\cI \right\} = \left\{\tilde m\in\M:
   s^\pi_{\psi, \tilde m} = s^\pi_\psi \;\;\pi\in\Pi, s\in\cI \right\}.
\end{align*}
Conversely, it is immediate to see that 
\[\left\{\tilde m\in\M:
   s^\pi_{\psi, \tilde m} = s^\pi_\psi \;\;\pi\in\Pi, s\in\cI \right\} \subseteq \left\{\tilde m\in\M:
    \left(\psi\T^\pi \iota\right)^js = \left(\psi\T^\pi_{\tilde m}\iota\right)^js\;\;\forall j\ge k,\pi\in\Pi, s\in\cI \right\},\]
    so that we can combine with our work above and conclude that 
    \[ \bigcap_{j\geq k} \M^{j}_\psi(\Pi,\cI) \subseteq \left\{\tilde m\in\M:
   s^\pi_{\psi, \tilde m} = s^\pi_\psi \;\;\pi\in\Pi, s\in\cI \right\}=\Mdist^\infty(\Pi).\]
   Since this expression is independent of $k$, we can take the union over $k$ to obtain 
   \[ \liminf_{k\to\infty}\M^k_\psi(\Pi,\cI)=\bigcup_{k\geq 1}\bigcap_{j\geq k} \M^{j}_\psi(\Pi,\cI) \subseteq  \Mdist^\infty(\Pi).\]
   Moreover it is immediate to see that \[\Mdist^\infty(\Pi)\subseteq \bigcup_{k\geq 1}\bigcap_{j\geq k} \M^{j}_\psi(\Pi,\cI),\]
   which together give 
   \[\liminf_{k\to\infty}\M^k_\psi(\Pi,\cI)= \Mdist^\infty(\Pi).\]

    Since the limit inferior and limit superior are equal, we have the existence of the limit
    \[\lim_{k\to\infty}\M^k_\psi(\Pi,\cI)= \Mdist^\infty(\Pi).\]
\end{proof}

\statisticalProjected*
\begin{proof}
    We can write out
    \begin{align*}
    \M_\psi(\Pi,\cI) &= \bigg\{ \tilde m\in\M:
    \psi\left(\T^\pi \iota(s)\right) = \psi\left(\T^\pi_{\tilde m}\iota(s)\right)\; \forall \pi\in\Pi, s\in\cI \bigg\}\\
&= \bigg\{ \tilde m\in\M:
    \psi\left(\T^\pi \eta\right) = \psi\left(\T^\pi_{\tilde m}\eta \right)\; \forall \pi\in\Pi, s\in\cD \bigg\}\\
    &=  \bigg\{ \tilde m\in\M:
    \iota(\psi\left(\T^\pi \eta\right)) = \iota(\psi\left(\T^\pi_{\tilde m}\eta) \right)\; \forall \pi\in\Pi, s\in\cD \bigg\}\\
    &=  \bigg\{ \tilde m\in\M:
    \Pi_{\F_\psi}\T^\pi \eta = \Pi_{\F_\psi} \T^\pi_{\tilde m}\eta\; \forall \pi\in\Pi, s\in\cD \bigg\},
    \end{align*}
    where the second to last inequality follows from the injectivity of $\iota$. Similarly we have that
        \begin{align*}
    \M^\infty_\psi(\Pi,\cI) &= \bigg\{ \tilde m\in\M:
    \psi\left(\eta^\pi\right) = \psi\left(\eta^\pi_{\tilde m}\right)\; \forall \pi\in\Pi \bigg\}\\
    &= \bigg\{ \tilde m\in\M:
    \iota(\psi\left(\eta^\pi\right)) = \psi\left(\iota(\eta^\pi_{\tilde m}\right))\; \forall \pi\in\Pi \bigg\}\\
    &= \bigg\{ \tilde m\in\M:
    \Pi_{\F_\psi}\eta^\pi = \Pi_{\F_\psi}\eta^\pi_{\tilde m}\; \forall \pi\in\Pi \bigg\},
    \end{align*}
    where the second equality follows by injectivity of $\iota$.
\end{proof}

\statFuncRisk*
\begin{proof}
    Let $\pi_\rho^*$ be an optimal policy for $\rho$ in $m^*$, and let $\tilde{\pi}_\rho^*$ be an optimal policy for $\rho$ in $\tilde{m}$. For contradiction, suppose that $\tilde{\pi}_\rho^*$ is not optimal in $m^*$. Then for all $x\in\stateSpace$ we have that 
    \[\rho(\eta^{\tilde{\pi}_\rho^*}(x))\leq \rho(\eta^{\pi_\rho^*}(x)),\]
    and for at least one $x\in\stateSpace$ we have 
    \[\rho(\eta^{\tilde{\pi}_\rho^*}(x))< \rho(\eta^{\pi_\rho^*}(x)).\]
    Let us choose this $x$, and note that this implies
    \begin{alignat*}{2}
        &\qquad& \rho\left(\eta^{\tilde{\pi}_\rho^*}(x)\right)&< \rho\left(\eta^{\pi_\rho^*}(x)\right)\\
        \iff && \sum_{i=1}^m\alpha_i\psi_i\left(\eta^{\tilde{\pi}_\rho^*}(x)\right)&<\sum_{i=1}^m\alpha_i\psi_i\left( \eta^{\pi_\rho^*}(x)\right)\\
         \iff && \sum_{i=1}^m\alpha_i\psi_i\left(\eta^{\tilde{\pi}_\rho^*}_{\tilde m}(x)\right)&<\sum_{i=1}^m\alpha_i\psi_i\left( \eta^{\pi_\rho^*}_{\tilde m}(x)\right)\\
         \iff&& \rho\left(\eta^{\tilde{\pi}_\rho^*}_{\tilde m}(x)\right)&< \rho\left(\eta^{\pi_\rho^*}_{\tilde m}(x)\right),
    \end{alignat*}
    which contradicts the assumption that $\tilde{\pi}_\rho^*$ was optimal for $\rho$ in $\tilde{m}$.
\end{proof}

\subsection{Appendix Proofs}

\mMomentBellmanClosed*
\begin{proof}
    This proof relies on a theorem introduced by \citet{rowland2019statistics}, which we restate here.
    \begin{theorem}[\citet{rowland2019statistics}]
    Let $\psi=(\psi_1,\dots,\psi_m)$ be a Bellman closed sketch such that for each $i=1,\dots,m$, there exist $f_i:\R\to\R$ such that for all $\nu\in\psiDomain$, $\psi_i(\nu)=\E_{Z\sim \nu}\left[f_i(Z) \right]$. Then there exists real numbers $(b_{ij})_{i,j=1}^m$ such that for all $\nu \in\psiDomain$,
    \[\psi_i(\nu)=\sum_{j=1}^mb_{ij}\mu_j(\nu)+b_{i0},\]
    where $\mu_j$ is the $j$th moment functional (\cref{example:mMomentSketch}).
    \end{theorem}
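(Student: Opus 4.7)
The plan is to reduce the statement to a set-theoretic inclusion $\M^\infty_\psi(\bPi) \supseteq \M^\infty_{\psi^m_\mu}(\bPi)$, using the Rowland et al.\ theorem stated at the top of the proof as the key structural ingredient. The theorem says that a Bellman-closed sketch $\psi=(\psi_1,\dots,\psi_m)$ whose components are expectations of deterministic functions must admit the affine representation $\psi_i(\nu) = \sum_{j=1}^m b_{ij}\mu_j(\nu) + b_{i0}$, so $\psi$ factors through $\psi^m_\mu$ via an affine map $A : \R^m \to \R^m$ (with $A(s) = Bs + b_0$ for $B = (b_{ij})$ and $b_0 = (b_{10},\dots,b_{m0})$). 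Everything downstream then boils down to chasing this factorization through the definition of proper equivalence.

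Concretely, the steps are as follows. First, invoke the Rowland et al.\ theorem verbatim to obtain the affine representation of each $\psi_i$ in terms of the first $m$ moments. Second, show $\M^\infty_{\psi^m_\mu}(\bPi) \subseteq \M^\infty_\psi(\bPi)$: fix $\tilde m \in \M^\infty_{\psi^m_\mu}(\bPi)$ and an arbitrary $\pi \in \bPi$, so that $s^\pi_{\psi^m_\mu} = s^\pi_{\psi^m_\mu, \tilde m}$, i.e.\ for every $x \in \stateSpace$ the first $m$ moments of $\eta^\pi(x)$ and $\eta^\pi_{\tilde m}(x)$ coincide. Applying the affine identity componentwise then yields
\begin{equation*}
    \psi_i(\eta^\pi(x)) = \sum_{j=1}^m b_{ij} \mu_j(\eta^\pi(x)) + b_{i0} = \sum_{j=1}^m b_{ij} \mu_j(\eta^\pi_{\tilde m}(x)) + b_{i0} = \psi_i(\eta^\pi_{\tilde m}(x)),
\end{equation*}
for every $i$ and $x$, hence $s^\pi_\psi = s^\pi_{\psi, \tilde m}$, and so $\tilde m \in \M^\infty_\psi(\bPi)$.

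Third, translate the inclusion into the statement about planning. By hypothesis, $\rho$ can be planned for exactly using any model in $\M^\infty_\psi(\bPi)$; that is, for every $\tilde m \in \M^\infty_\psi(\bPi)$, any optimal policy for $\rho$ in $\tilde m$ is optimal for $\rho$ in $m^*$. Because $\M^\infty_{\psi^m_\mu}(\bPi) \subseteq \M^\infty_\psi(\bPi)$, the same conclusion is inherited by every $\tilde m \in \M^\infty_{\psi^m_\mu}(\bPi)$, which is exactly the claim that $\rho$ can be planned for exactly using any proper $\psi^m_\mu$ equivalent model.

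I do not expect any substantial obstacle here since the Rowland et al.\ theorem is what does the heavy lifting and the remainder is bookkeeping against the definitions of $\M^\infty_\psi(\bPi)$ and $\M^\infty_{\psi^m_\mu}(\bPi)$; the only mild subtlety is that the affine representation must be applied pointwise in $x$ and uniformly in $\pi \in \bPi$, and one should note in passing that the moment functionals $\mu_j$ for $j \le m$ are well-defined on $\psiDomain$ (which is implicit in the hypotheses of the Rowland theorem, since the expectations $\E_{Z\sim\nu}[f_i(Z)]$ are assumed finite and the representation forces finiteness of moments through the explicit linear combination). Beyond that the argument is essentially a two-line containment check dressed up in the language of risk-sensitive planning.
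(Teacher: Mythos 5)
The statement you were asked to prove is the characterization theorem of \citet{rowland2019statistics}: that any Bellman-closed sketch whose components have the form $\psi_i(\nu)=\E_{Z\sim\nu}[f_i(Z)]$ must be an affine function of the first $m$ moment functionals. Your proposal does not prove this; it invokes the theorem ``verbatim'' as its key structural ingredient and then derives a downstream consequence from it. That is circular with respect to the stated target. All of the difficulty lives in the theorem itself: one must show that Bellman closure forces the linear span of $\{1,f_1,\dots,f_m\}$ to be invariant under the substitutions $f(\cdot)\mapsto f(r+\gamma\,\cdot)$ induced by the pushforward inside the distributional Bellman operator (obtained by testing closure on deterministic transitions with arbitrary reward $r$), and then that a finite-dimensional space of real-valued functions closed under all such translations and dilations consists of polynomials of degree at most $m$, which is precisely the affine-in-moments representation. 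None of that appears in your write-up, so as a proof of the quoted theorem it is empty. (The paper does not reprove this result either; it imports it from the cited work, which is a legitimate move for the paper but not for a proof attempt of this statement.)

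What you did prove is the enclosing \cref{prop:mMomentBellmanClosed}, and for that statement your route is sound and genuinely different from the paper's. The paper argues that if $\rho$ can be planned for using any proper $\psi$-equivalent model then $\rho=\sum_i\alpha_i\psi_i$ (implicitly relying on an unproved converse of \cref{prop:statFuncRisk}), rewrites this expression in the span of the moments via the coefficients $b_{ij}$, and reapplies \cref{prop:statFuncRisk}. You instead establish the containment $\M^\infty_{\psi^m_\mu}(\bPi)\subseteq\M^\infty_\psi(\bPi)$ directly from the affine representation, applied pointwise in $x$ and uniformly in $\pi$, and conclude by monotonicity of the planning guarantee over sets of models; this bypasses the converse step entirely and is the cleaner argument for that proposition. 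But it does not discharge the obligation to prove the theorem you were actually given.
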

    Next, let us suppose that $\rho$ can be planned for optimally by any $\tilde{m} \in \M^\infty_\psi(\bPi)$, then there must exist $(\alpha_i)_{i=1}^m$ such that for all $\nu\in\psiDomain\bigcap \rhoDomain$, $\rho(\nu)=\sum_{i=1}^m\alpha_i\psi_i(\nu)$. Using the coefficients $(b_{ij})_{i,j=1}^m$ introduced in the theorem statement above, we have that 
    \begin{align*}
    \rho(\nu) &= \sum_{i=1}^m\alpha_i(\psi_i(\nu)\\ 
    &=\sum_{i=1}^m\alpha_i\left(\sum_{j=1}^mb_{ij}\mu_j(\nu)+b_{i0}\right)\\
    &=\sum_{i=1}^m\alpha_i\sum_{j=1}^mb_{ij}\mu_j(\nu) + \sum_{i=1}^m\alpha_ib_{i0}\\
    &=\sum_{j=1}^m\beta_j\mu_j(\nu) +\beta_0,
    \end{align*}
    where $\beta_j=\sum_{i=1}^m\alpha_ib_{ij}$ for $j=0,\dots,m$. We can then apply \cref{prop:statFuncRisk}, and we are complete.

\end{proof}

\properIntersectionOverPolicies*
\begin{proof}
    We begin by rewriting the definition 
    \begin{align*}
    \M^\infty_\psi(\Pi) &= \left\{ \tilde m\in\M: s^\pi_{\psi,\tilde m} = s_\psi^\pi\text{ for all }\pi\in\Pi\right\}\\
    &= \bigcap_{\pi \in \Pi} \left\{ \tilde m\in\M: s^\pi_{\psi,\tilde m} = s_\psi^\pi\right\}\\
    &= \bigcap_{\pi\in\Pi}\M^\infty_\psi(\{\pi\}).
    \end{align*}
    Next, note that for any $\pi$ and any $k\in\N$ we have $\M^\infty_\psi(\{\pi\})\subseteq \M^k_\psi(\{\pi\}, \{s^\pi_\psi\})$, since if $s^\pi_\psi=s^\pi_{\psi,\tilde m}$ we can write out
    \begin{alignat*}{2}
        &\qquad& s^\pi_\psi & =s^\pi_{\psi,\tilde m}\\
        \implies && (\T^\pi_{\psi,\tilde m})^ks^\pi_\psi & = (\T^\pi_{\psi,\tilde m})^ks^\pi_{\psi,\tilde m}\\
        \implies&& (\T^\pi_{\psi,\tilde m})^ks^\pi_\psi & = s^\pi_{\psi,\tilde m} \\
        \implies&& (\T^\pi_{\psi,\tilde m})^ks^\pi_\psi & = s^\pi_{\psi} \\
        \implies&& (\T^\pi_{\psi,\tilde m})^ks^\pi_\psi & = (\T_\psi^\pi)^ks^\pi_{\psi},
    \end{alignat*}
    and hence $\tilde m\in \M^k_\psi(\{\pi\}, \{s^\pi_\psi\})$. Conversely, if we let $\tilde m\in \M^k_\psi(\{\pi\}, \{s^\pi_\psi\})$, we can write out
       \begin{alignat*}{2}
        &\qquad& (\T^\pi_{\psi,\tilde m})^ks^\pi_\psi & = (\T_\psi^\pi)^ks^\pi_{\psi}\\
        \implies && (\T^\pi_{\psi,\tilde m})^ks^\pi_\psi & = s^\pi_{\psi}\\
        \implies && (\T^\pi_{\psi,\tilde m})^{2k}s^\pi_\psi & = (\T_\psi^\pi)^ks^\pi_{\psi}\\
        \implies && (\T^\pi_{\psi,\tilde m})^{2k}s^\pi_\psi & = s^\pi_{\psi},
    \end{alignat*}
    which we can repeat $n$ times to obtain $(\T^\pi_{\psi,\tilde m})^{nk}s^\pi_\psi = s^\pi_{\psi}$. Sending $n\to\infty$ and using the fact that $\psi$ is continuous gives us $s_\psi^\pi=\lim_{n\to\infty}(\T^\pi_{\psi,\tilde m})^{nk}s^\pi_\psi=s^\pi_{\psi,\tilde m}$, which then gives us $\tilde m \in \M^k_\psi(\{\pi\},\{s^\pi_\psi\})$.
\end{proof}

\section{General classes of policies}\label{sec:policies}
In \cref{sec:background}, we considered stationary Markov policies. One can further consider history-dependent policies, which don't have to be stationary nor Markov, but simply measurable with respect to the filtration $\mathcal{F}_t$ given by $\mathcal{F}_t=\sigma\left(\left(\prod_{i=0}^{t-1}(\stateSpace\times \actionSpace)\right)\times \stateSpace\right)$ (where for a collection of sets $A$, $\sigma(A)$ is the $\sigma$-algebra generated by $A$). This is a much larger class of policies, and learning a policy in this class is infeasible in general \citep{puterman2014markov}.

In the risk-neutral setting, the difficulties associated with learning a history-dependent policies can be avoided: for every history-dependent policy, there exists a Markov stationary policy which achieves the same expected return. In particular, no history-dependent policy achieves a return higher than a Markov stationary policy, and thus it suffices to solely consider learning a Markov stationary policy.

Unfortunately, such a result does not exist for the risk-sensitive setting: for a general risk measure, there exists history-dependent policies which achieve a higher objective of return than all Markov stationary policies. Moreover, a Markov stationary policy which is optimal as defined in \cref{sec:background} may not exist in general. Despite this negative result, the standard approach in practice is nonetheless to learn an approximately optimal Markov stationary policy \citep{ott2011, chow2014algorithms, lim2022distributional}, and this is the approach taken in this work as well.



Due to the fact that an optimal policy may not exist, \cref{thm:distributionEquivRiskMeasures} may seem to not generally apply, as it only addresses the case when a Markov stationary optimal policy exists. We now present a weaker version of this theorem, which addresses the case in which such an optimal policy does not exist.

To state the proposition, we first introduce the notion of policy domination. Suppose $\pi_1,\pi_2 \in \bPi$. We say that $\pi_1$ \emph{dominates} $\pi_2$ with respect to $\rho$ if
\[\rho\,(\eta^{\pi_1}(x))\geq  \rho\,(\eta^{\pi_2}(x)), \, \forall x\in \stateSpace.\]
It is straightforward to see that policy domination provides a partial order over the set of Markov stationary policies $\bPi$. With this in mind, we present the proposition.

\begin{proposition}
Let $\rho$ be any risk measure, and let $\pi_1,\pi_2$ be policies such that $\pi_1$ dominates $\pi_2$ with respect to $\rho$ in an approximate model $\tilde{m}\in \Mdist^\infty(\bPi)$. Then $\pi_1$ dominates $\pi_2$ with respect to $\rho$ in $m^*$.
\end{proposition}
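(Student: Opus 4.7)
The plan is to mimic the proof of \cref{thm:distributionEquivRiskMeasures} almost verbatim, since the statement is really just a generalization of that theorem from ``optimality'' (which is domination over every policy in $\bPi$) to ``domination'' (which is domination over a particular comparator policy $\pi_2$). The key ingredient is the same: for any $\tilde m \in \Mdist^\infty(\bPi)$, the definition gives $\eta^\pi = \eta^\pi_{\tilde m}$ for every $\pi\in\bPi$, and hence $\rho(\eta^\pi(x)) = \rho(\eta^\pi_{\tilde m}(x))$ for every $x \in \stateSpace$ and every risk measure $\rho$, provided both sides are in $\probMeasures_\rho(\R)$.

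With that in hand, I would argue directly (no contradiction needed). Starting from the hypothesis that $\pi_1$ dominates $\pi_2$ with respect to $\rho$ in $\tilde m$, I would write, for an arbitrary $x\in\stateSpace$,
\begin{align*}
\rho\bigl(\eta^{\pi_1}(x)\bigr) \;=\; \rho\bigl(\eta^{\pi_1}_{\tilde m}(x)\bigr) \;\geq\; \rho\bigl(\eta^{\pi_2}_{\tilde m}(x)\bigr) \;=\; \rho\bigl(\eta^{\pi_2}(x)\bigr),
\end{align*}
where the two outer equalities use $\tilde m \in \Mdist^\infty(\bPi)$ applied to $\pi_1$ and $\pi_2$ respectively, and the middle inequality is exactly the hypothesis that $\pi_1$ dominates $\pi_2$ in $\tilde m$. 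Since $x$ was arbitrary, this yields domination of $\pi_1$ over $\pi_2$ in $m^*$ by the definition of policy domination given just above the proposition.

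There is essentially no obstacle here: the argument is a one-line chain of (in)equalities once the definition of $\Mdist^\infty(\bPi)$ is unpacked. The only subtlety worth remarking on is that we do not need $\rho$ to be defined on all of $\realProbMeasures$, only on the return distributions of $\pi_1$ and $\pi_2$; and the hypothesis of the proposition implicitly ensures this, since $\rho(\eta^{\pi_i}_{\tilde m}(x))$ must be well-defined to even state the domination assumption, and $\eta^{\pi_i}_{\tilde m} = \eta^{\pi_i}$ by proper distribution equivalence. So the plan is simply: unpack the definition of $\Mdist^\infty(\bPi)$, substitute $\eta^{\pi_i}_{\tilde m}$ for $\eta^{\pi_i}$ on both sides of the domination inequality, and conclude.
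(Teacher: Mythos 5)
Your proof is correct and uses exactly the same key fact as the paper's, namely that $\tilde m \in \Mdist^\infty(\bPi)$ gives $\eta^{\pi} = \eta^{\pi}_{\tilde m}$ for every policy, so $\rho$ evaluates identically on the two return distributions. The only difference is cosmetic: you argue directly via a chain of (in)equalities, whereas the paper phrases the same substitution as a proof by contradiction; your direct version is, if anything, slightly cleaner.
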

\begin{proof}
    Let $\pi_1,\pi_2$ satisfy the statement of the proposition. For contradiction, suppose that $\pi_1$ does not dominate $\pi_2$ in $m^*$. Then for all $x\in\stateSpace$ we have that 
    \[\rho(\eta^{{\pi_1}}(x))\leq \rho(\eta^{\pi_2}(x)),\]
    and for at least one $x\in\stateSpace$ we have 
    \[\rho(\eta^{{\pi_1}}(x))<\rho(\eta^{\pi_2}(x)).\]
    Let us choose this $x$, and note that this implies
    \begin{alignat*}{2}
        &\qquad& \rho(\eta^{{\pi_1}}(x))&<\rho(\eta^{\pi_2}(x))\\
         \iff&& \rho\left(\eta^{{\pi_1}}_{\tilde m}(x)\right)&< \rho\left(\eta^{\pi_2}_{\tilde m}(x)\right),
    \end{alignat*}
    since by assumption of $\tilde m \in \Mdist^\infty(\bPi)$ we have that $\eta^{\pi}=\eta_{\tilde m}^{\pi}$ for any $\pi\in\bPi$. But this contradicts the assumption that $\pi_1$ dominated $\pi_2$ in $\tilde{m}$, and we are complete.
\end{proof}

We note that this proposition should be interpreted as follows: suppose one learns an approximately optimal policy in $\tilde{m}\in \Mdist^\infty(\bPi)$, in the sense that it dominates a set of other candidate policies. Then this policy will be approximately optimal in $m^*$, in the sense that it will still dominate this same set of policies in $m^*$. We note that it is straightforward to adapt \cref{prop:statFuncRisk} in the same way.

\section{Empirical details}\label{sec:appendixEmpirical}

We begin with a detailed description of the environments used, followed by details on the compute resources used.

\subsection{Environment descriptions}

\subsubsection{Tabular environments}

\textbf{Four rooms}

We adapt the stochastic four rooms domain used in \citet{grimm2021proper} by making certain states risky. In the original domain, an agent attempts to navigate from the start state (bottom left) to the goal state (top right), by moving up, down, left, or right. At each step however, there is a 20\% chance that the agent slips and moves in a random direction, rather than the intended one. A reward of 1 is achieved for reaching the goal state, and the reward is 0 elsewhere. We then select certain states to become `risky' states. These states have the same transition dynamics, but modified reward: if they transition in the intended direction they receive a small, positive reward, and if they transition in a random direction they receive a large negative reward. The rewards are chosen so that the expected reward from a state has a slightly positive expectation, so that risk-neutral policies would pass through the state, but risk-averse ones would not.

\textbf{Windy cliffs}

We consider the stochastic adaptation of the cliff walk environment \citep{sutton2018reinforcement} as introduced in \citet{bdr2022}. An agent must walk along a cliff to reach its goal, but at every step, it has a $1/3$ probability of moving in a random direction. A reward of $-1$ is obtained for falling off the cliff, and a reward of 1 is obtained for reaching the goal state.

\textbf{Frozen lake}

We use the 8 by 8 frozen lake domain as specified in \citet{brockman2016openai}. There are four actions corresponding to walking in each direction, however taking an action has a $1/3$ probability of moving in the intended direction, and a $1/3$ probability of moving in each of the perpendicular directions. The agent begins in the top left corner, and attempts to reach the goal at the bottom right corner, at which point the agents receives a reward of 1 and the episode ends. Within the environment there are various holes in the ice, entering a hole will provide a reward of -1 and the episode ends. Episodes will also end after 200 timesteps. Following this, there are 3 possible returns for an episode: $-1$ for falling in a hole, 1 for reaching the goal, and 0 for reaching the 200 timesteps without reaching a hole state or the goal.

\subsubsection{Option trading environment}
We use the option trading environment as implemented in \citet{lim2022distributional}. In particular, the environment simulates the task of learning a policy of when to exercise American call options. The state space is given as $\stateSpace= \R^2$, where for a given $\stateSpace \ni x=(p,t)$, $p$ represents the price of the underlying stock, and $t$ represents the time until maturity. The two actions represent holding the stock and executing, and at maturity all options are executed. The training data is generated by assuming that stock prices follows geometric Brownian motion \citep{li09}. For evaluation, real stock prices are used, using the data of 10 Dow instruments from 2016-2019.

\subsection{Compute time and infrastructure}
For the tabular experiments, each model took roughly 1 hour to train on a single CPU, for an approximate total of 120 CPU hours for the tabular set of experiments. For the option trading experiments, training a policy for a given CVaR level took roughly 40 minutes on a single Tesla P100 GPU on average, for an approximate total of 200 GPU hours for this set of experiments. 

\section{Limitations and future work}

While we introduced a novel framework and demonstrated strong theoretical and empirical results, our work has limitations, which we now discuss and present as possible directions for future work. The first is investigating how well the statistical functional $\psi$ used can plan for general risk measures not covered by \cref{prop:statFuncRisk}, and deriving bounds on its performance. A second is that our theory relies on the set $\Mdist^\infty(\bPi)$, while in practice we use $\Mdist^\infty(\Pi)$, where $\Pi\subseteq \bPi$ is a uniformly random subset. Investigating how this affects the theoretical results, along with investigating whether there is a better way to choose the set $\Pi$, are interesting questions in this direction.




\end{document}